\theoremstyle{definition}
\newtheorem{thm}{Theorem}[section]
\newtheorem{lem}{Lemma}[section]
\newtheorem{cor}{Corollary}[section]
\begin{document}

\begin{frontmatter}



\title{Insensitive Stochastic Gradient Twin Support Vector Machines for Large Scale Problems}


\author[1]{Zhen Wang}
\address[1]{School of Mathematical Sciences, Inner Mongolia University, Hohhot,
010021, P.R.China}

\author[2]{Yuan-Hai Shao \corref{cor1}}
\cortext[cor1]{Corresponding author. Tel./Fax:(+86)0571-87313551.}\ead{shaoyuanhai21@163.com}
\address[2]{School of Economics and Management, Hainan University, Haikou, 570228, P.R. China}

\author[1]{Lan Bai}

\author[3]{Li-Ming Liu}
\address[3]{School of Statistics, Capital University of Economics and Business, Beijing, 100070, P.R.China}

\author[4]{Nai-Yang Deng}
\address[4]{College of Science China Agricultural University, Beijing, 100083, P.R.China}

\begin{abstract}
Stochastic gradient descent algorithm has been successfully applied on support vector machines (called PEGASOS) for many classification problems. In this paper, stochastic gradient descent algorithm is investigated to twin support vector machines for classification. Compared with PEGASOS, the proposed stochastic gradient twin support vector machines (SGTSVM) is insensitive on stochastic sampling for stochastic gradient descent algorithm. In theory, we prove the convergence of SGTSVM instead of almost sure convergence of PEGASOS. For uniformly sampling, the approximation between SGTSVM and twin support vector machines is also given, while PEGASOS only has an opportunity to obtain an approximation of support vector machines. In addition, the nonlinear SGTSVM is derived directly from its linear case. Experimental results on both artificial datasets and large scale problems show the stable performance of SGTSVM with a fast learning speed.
\end{abstract}

\begin{keyword}
Classification, support vector machines, twin support vector
machines, stochastic gradient descent, large scale problem.
\end{keyword}

\end{frontmatter}


\section{Introduction}
Support vector machines (SVM), being powerful tool
for classification \cite{SVM1,SVM2,SVM3}, have already
outperformed most other classifiers in a wide variety of applications
\cite{A1,A2,A3}. Different from SVM with a pair of parallel hyperplanes, twin support vector machines (TWSVM) \cite{TWSVM,TBSVM} with a pair of nonparallel
hyperplanes has been proposed and developed, e.g., twin bounded support vector machines (TBSVM) \cite{TBSVM},
twin parametric margin support vector machines (TPMSVM) \cite{TPMSVM}, and weighted Lagrangian twin support vector machines (WLTSVM) \cite{WLTSVM}. These classifiers have been widely applied in many practical problems \cite{TWSVMdescent,STPMSVM,DTWSVM,TWSVC,MLTSVM,T1,T2,T3,T4,T5}. In the training stage, SVM solves a quadratic programming problem (QPP), whereas TWSVM solve two smaller QPPs by traditional solver such as interior method \cite{SVM2,KKT2,TWSVM}. However, neither SVM nor TWSVM based on these solvers can deal with the large scale problem, especially millions of samples.

In order to deal with the large scale problem, many improvements were proposed, e.g., for SVM, sequential minimal optimization, coordinate decent method, trust region Newton, and stochastic gradient descent algorithm (SGD) in \cite{SMO,SVMlight,LIBSVM,LIBLINEAR,PEGASOS}, and for TWSVM, successive overrelaxation technique, Newton-Armijo algorithm, and dual coordinate decent method in \cite{TBSVM,STPMSVM,LNSVM}. The stochastic gradient descent algorithm for SVM (PEGASOS) \cite{SGDSVM1,SGDSVM2,PEGASOS,ASGD} attracts a great attention, because it partitions the large scale problem into a series of subproblems by stochastic sampling with a suitable size. It has been proved that PEGASOS is almost sure convergent, and thus is able to find an approximation of the desired solution with high probability \cite{SGDconverge,SGDSVM2,PEGASOS}.
The existing experiments confirm the effectiveness of these algorithms with an amazing learning speed.

\begin{figure*}
\begin{center}
\subfigure[]
{\includegraphics[width=0.230\textheight]{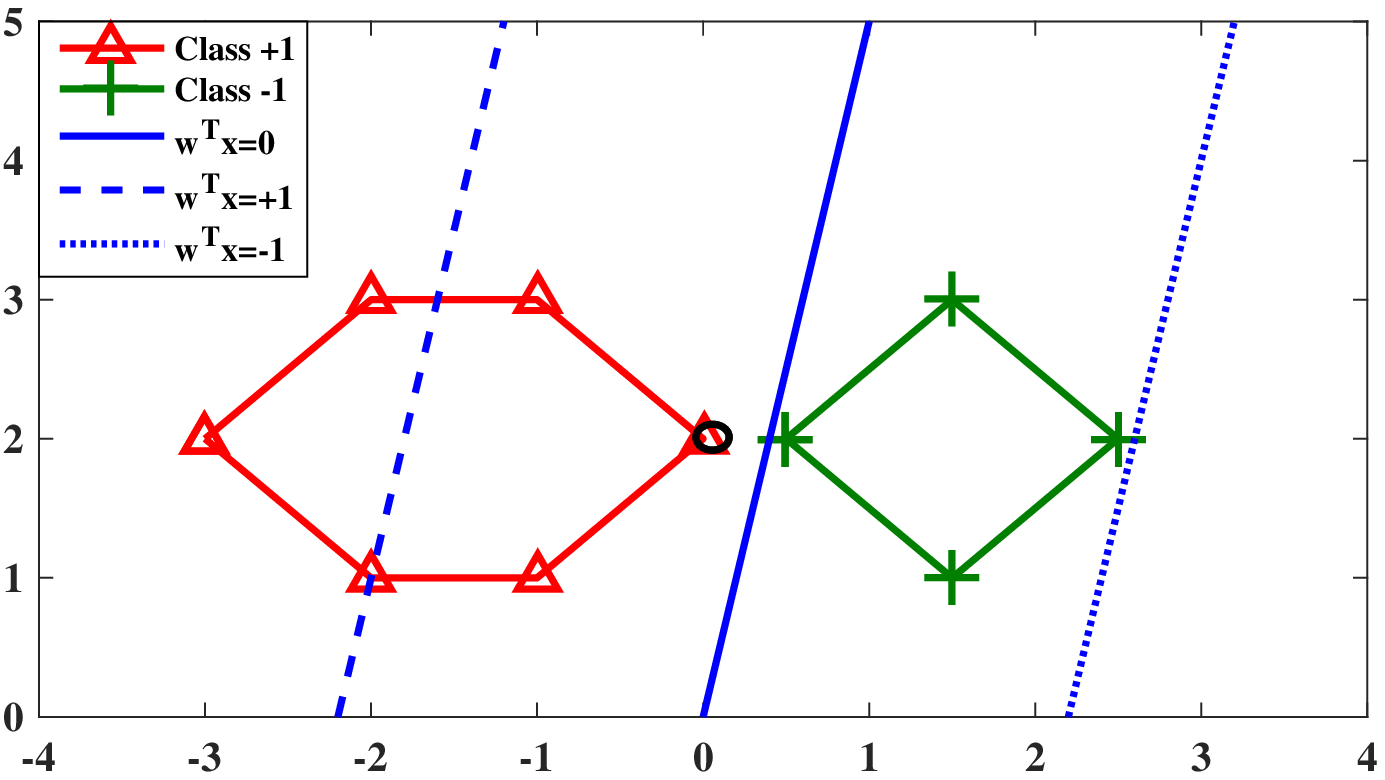}}
\subfigure[]
{\includegraphics[width=0.230\textheight]{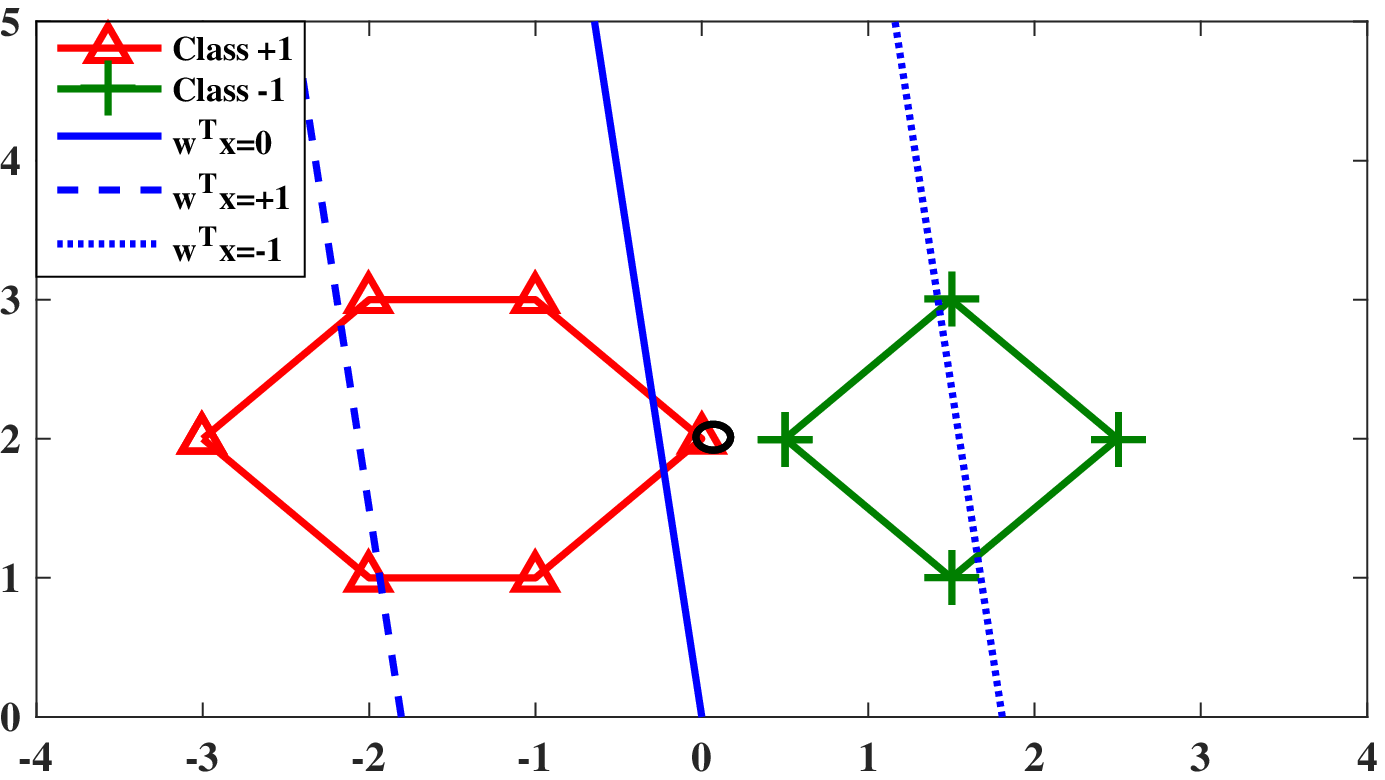}}
\subfigure[]
{\includegraphics[width=0.230\textheight]{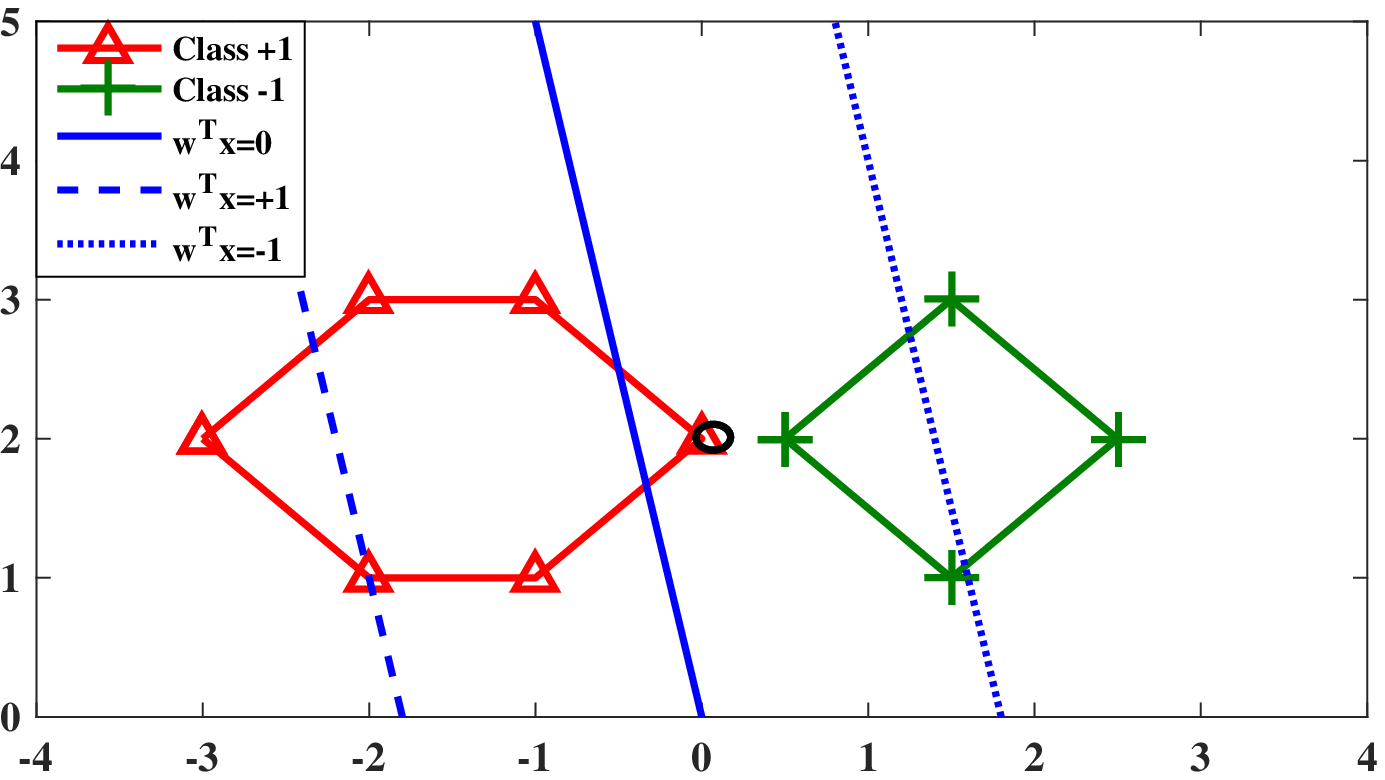}}
\\
\caption{PEGASOS on 10 samples from two classes. (i) Training includes all of the 10 samples with 11 iterations, and the circle sample is used twice; (ii) Training includes all of the 10 samples with 28 iterations, and the circle sample is used once; (iii) Training includes 9 samples with 27 iterations, where the circle sample is excluded.} \label{Exp2svm}
\end{center}
\end{figure*}

\begin{figure*}
\begin{center}
\subfigure[]
{\includegraphics[width=0.230\textheight]{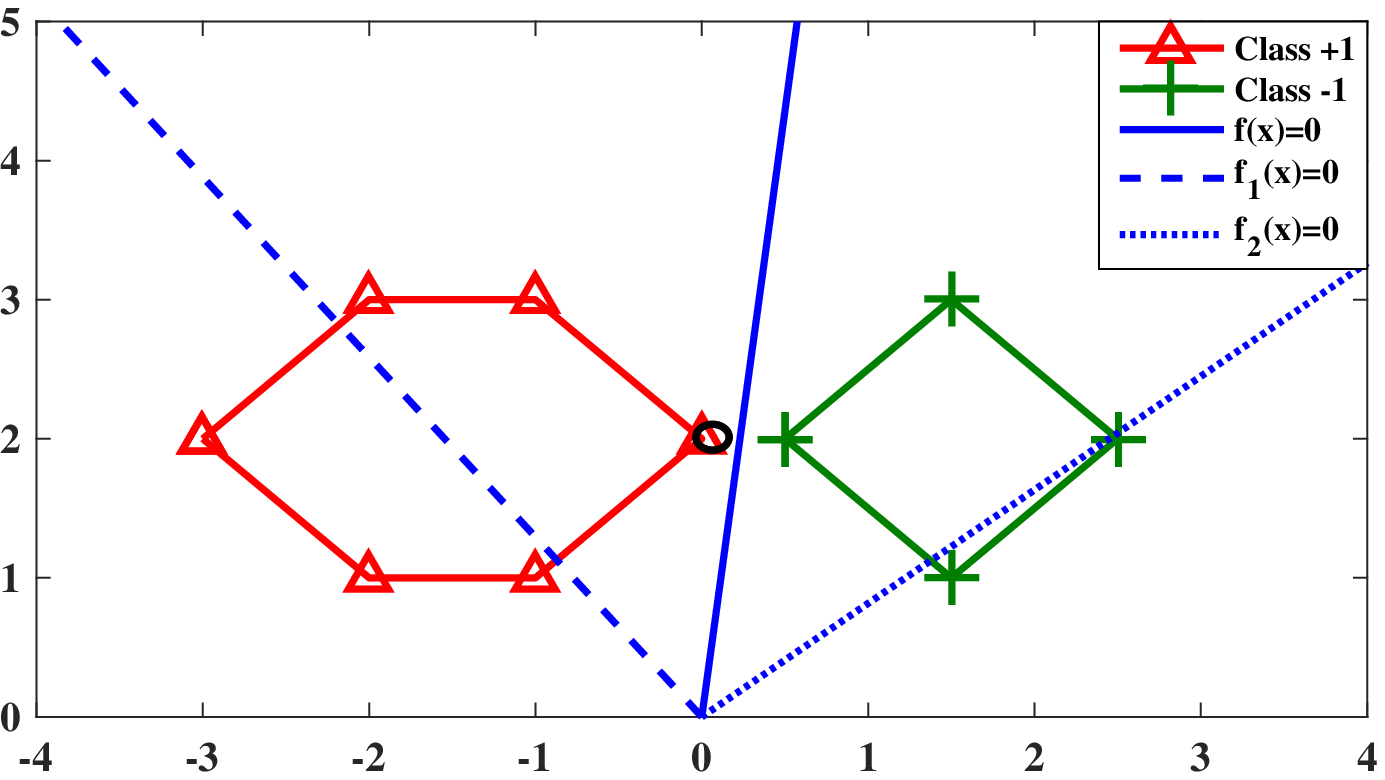}}
\subfigure[]
{\includegraphics[width=0.230\textheight]{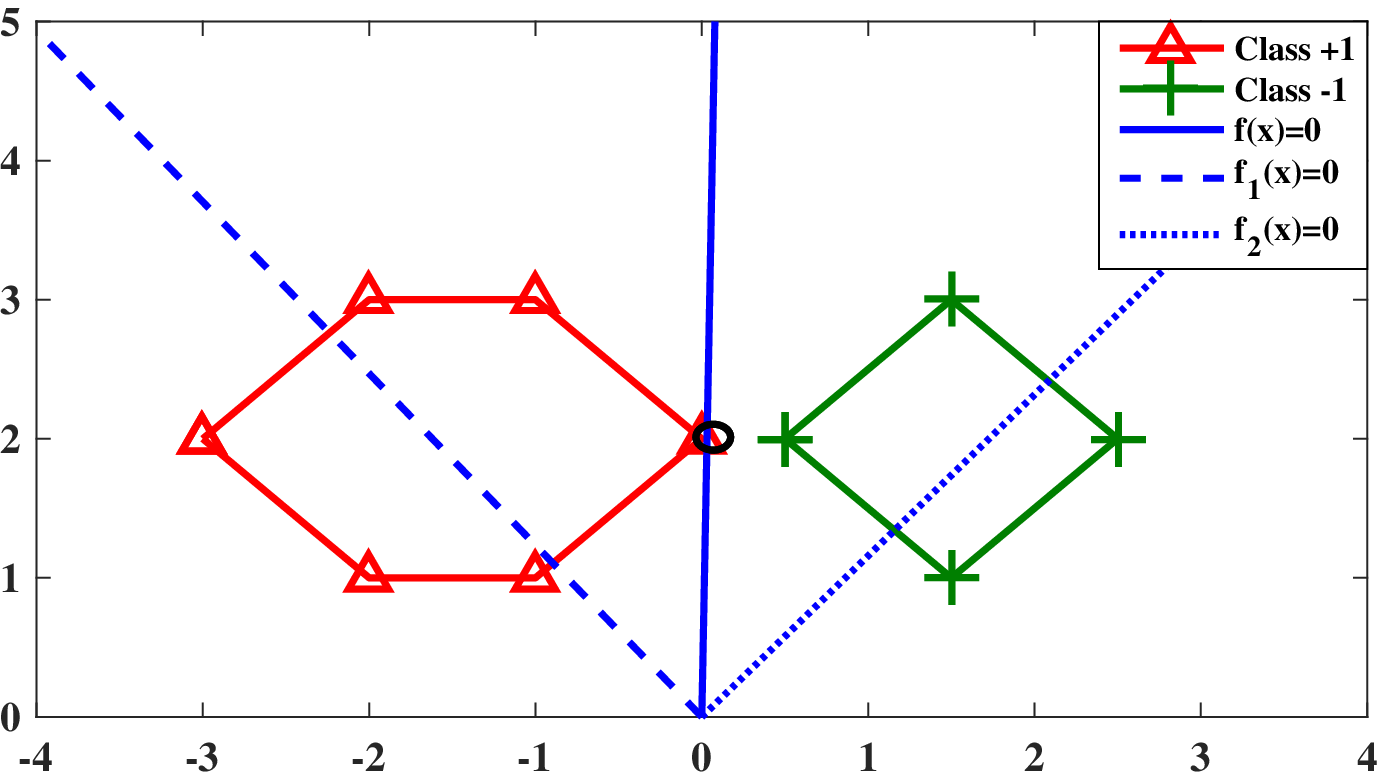}}
\subfigure[]
{\includegraphics[width=0.230\textheight]{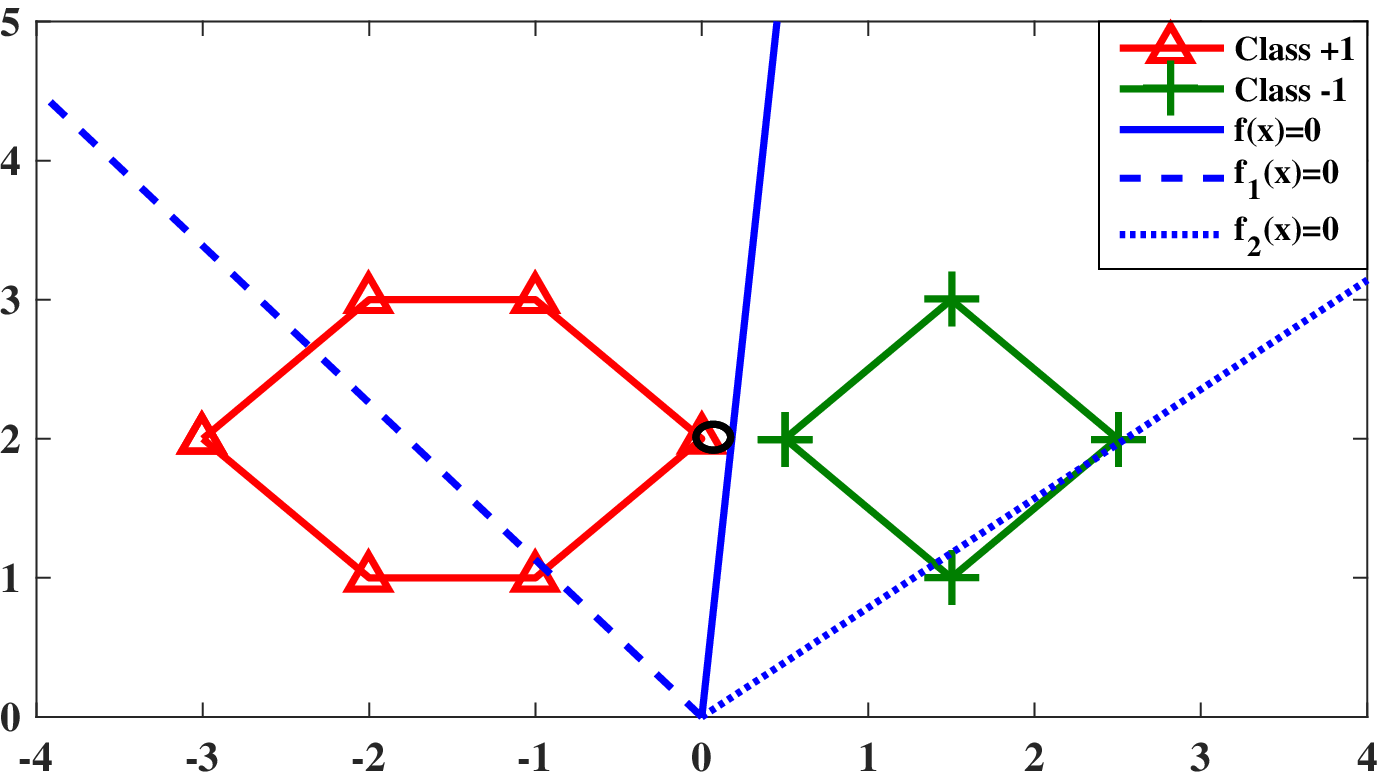}}
\\
\caption{SGTSVM on 10 samples from two classes. (i) Training includes all of the 10 samples with 7 iterations, and the circle sample is used twice; (ii) Training includes all of the 10 samples with 16 iterations, and the circle sample is used once; (iii) Training includes 9 samples with 15 iterations, where the circle sample is excluded.} \label{Exp2twin}
\end{center}
\end{figure*}

However, for large scale problem, the stochastic sampling in SGD may bring some difficulties to SVM due to only a small subset of the dataset is selected for training. In fact, if the subset is not suitable, PEGASOS would be weak. It is well known that in SVM the support vectors (SVs), a small subset of the dataset, decides the final classifier. If the stochastic sampling does not include the SVs sufficiently, the classifier would lose some generalizations. Figure \ref{Exp2svm} is a toy example for PEGASOS. There are two classes in this figure, where the positive and negative classes respectively include 6 and 4 samples, and the circle is one of the potential SVs. The solid blue line is the separating line obtained by PEGASOS with three different sampling: (i) strengthening the circle sample; (ii) infrequently using the circle sample; (iii) ignoring the circle sample. Figure \ref{Exp2svm} shows that the circle sample plays an important role on the separating line, and infrequently using or ignoring this sample would lead to misclassify.

Compared with SVM, it is significant that TWSVM is more stable for sampling and does not strongly depend on some special samples such as the SVs \cite{TWSVM,TBSVM}, which indicates SGD is more suitable for TWSVM. Therefore, in this paper, we propose a stochastic gradient twin support vector machines (SGTSVM). Different from PEGASOS, our method selects two samples from different classes randomly in each iteration to construct a pair of nonparallel hyperplanes. Due to TWSVM fits all of the training samples, our method is stable for the stochastic sampling and thus gains well generalizations. Moreover, the characteristics inherited from TWSVM result in that our SGTSVM suits for many cases, e.g., ``cross planes'' dataset \cite{GEPSVM} and preferential classification \cite{TWSVM}. As the above toy example, Figure \ref{Exp2twin} shows the corresponding results by SGTSVM. Comparing Figure 2 with Figure 1, it is clear that SGTSVM performs better than PEGASOS.

The main contributions of this paper includes:

\noindent
(i) a SGD-based TWSVM (SGTSVM) is proposed, and it is very easy to be extended to other TWSVM-type classifiers;

\noindent
(ii) we prove that the proposed SGTSVM is convergent, instead of almost sure convergence in PEGASOS;

\noindent
(iii) for the uniformly sampling, it is proved that the original objective of the solution to SGTSVM is bounded by the optimum of TWSVM, which indicates the solution to SGTSVM is an approximation of the optimal solution to TWSVM, while PEGASOS only has an opportunity to obtain an approximation of the optimal solution to SVM (more information please see Corollaries 1 and 2 in \cite{PEGASOS});

\noindent
(iv) the nonlinear case of SGTSVM is obtained directly based on its original problem;

\noindent
(v) each iteration of SGTSVM includes no more than $8n+4$ multiplications without additional storage, so it is the fastest one than other proposed TWSVM-type classifiers.


The rest of this paper is organized as follow. Section 2 briefly reviews SVM, PEGASOS, and TWSVM. Our linear and nonlinear SGTSVMs together with the theoretical analysis are elaborated in Section 3. Experiments are arranged in Section 4. Finally, we give the conclusions.

\section{Related Works}
Consider a binary classification problem in the $n$-dimensional real
space $R^n$. The set of training samples is represented by
$X\in R^{n\times m}$, where $x\in R^{n}$ is the sample
with the label $y\in\{+1,-1\}$. We further organize
the $m_1$ samples of Class $+1$ into a matrix $X_1 \in R^{n\times m_1}$
and the $m_2$ samples of Class $-1$ into a matrix $X_2 \in
R^{n\times m_2}$. Below, we give a brief outlines of some related works.

\subsection{SVM}
Support vector machines (SVM) \cite{SVM1,CSVM} searches for a separating hyperplane
\begin{eqnarray}
w^{\top}x+b=0,
\end{eqnarray}
where $w\in R^n$ and $b\in R$. By introducing the
regularization term, the primal problem of SVM can
be expressed as a QPP as follow
\begin{eqnarray}\label{OSVM}
\begin{array}{ll}
 \underset{w,b}{\min} ~~~~ \frac{1}{2}||w||^2+\frac{c}{m}e^\top \xi \\
 \hbox{s.t.\ }~ ~~~~~D(X^\top w+b)\geq e -\xi,~~  \xi\geq0,
\end{array}
\end{eqnarray}
where $||\cdot||$ denotes the $L_2$ norm, $c>0$ is a parameter with some quantitative meanings
\cite{CSVM}, $e$ is a vector of ones with an appropriate dimension, $\xi\in R^m$ is the slack vector,
and $D=\text{diag}(y_1,\ldots,y_m)$. Note that the minimization of the
regularization term $ \|w\|^2$ is equivalent to
maximize the margin between two parallel supporting
hyperplanes $w^{\top}x+b=\pm1$. And the structural risk
minimization principle is implemented in this problem \cite{SVM1}.

\subsection{PEGASOS}
PEGASOS \cite{SGDSVM2,PEGASOS} considers a strongly convex problem by modifying \eqref{OSVM} as follow
\begin{equation}\label{SVM}
\begin{array}{l}
\underset{w}{\min}~~
\frac{1}{2}||w||^2+\frac{c}{m}e^\top\xi\\
s.t.~~~~DX^\top w\geq e-\xi,\xi\geq0,
\end{array}
\end{equation}
and recasts the above problem to
\begin{equation}\label{SVM2}
\begin{array}{l}
\underset{w}{\min}~~
\frac{1}{2}||w||^2+\frac{c}{m}e^\top(e-DX^\top w)_+,
\end{array}
\end{equation}
where $(\cdot)_+$ replaces negative components of a vector by zeros.

In the $t$th iteration ($t\geq1$), PEGASOS constructs a temporary function, which is defined by a random sample $x_t\in X$ as
\begin{equation}\label{SGDSVM}
\begin{array}{l}
g_t(w)=\frac{1}{2}||w||^2+c(1-y_tw^\top x_t)_+.
\end{array}
\end{equation}
Then, starting with an initial $w_1$, PEGASOS iteratively updates $w_{t+1}=w_{t}-\eta_t\nabla_{w_t}g_t(w)$ for $t\geq1$, where $\eta_t=1/t$ is the step size and $\nabla_{w_t}g_t(w)$ is the sub-gradient of $g_t(w)$ at $w_t$,
\begin{equation}
\begin{array}{l}
\nabla_{w_t}g_t(w)=w_t-cy_{t}x_{t} \text{sign}(1-y_{t}w_t^\top x_t)_+.
\end{array}
\end{equation}
When some terminate conditions are satisfied, the last $w_{t}$ is outputted as $w$. And a new sample $x$ can be predicted by
\begin{equation}
\begin{array}{l}
y=\text{sign}(w^\top x).
\end{array}
\end{equation}

It has been proved that the average solution $\bar{w}=\frac{1}{T}\sum\limits_{t=1}^Tw_t$ is bounded by the optimal solution $w^*$ to \eqref{SVM2} with $o(1)$, and thus PEGASOS has with a probability of at least $1/2$ to find a good approximation of $w^*$ \cite{PEGASOS}. The authors of \cite{PEGASOS} also pointed out that $w_{T}$ is often used instead of $\bar{w}$ in practice.
The sample $x_t$ which is selected randomly can be replaced with a small subset belonging to the whole dataset, and  the subset only including a sample is often used in practice \cite{SGDSVM2,PEGASOS,ASGD}. In order to extend the generalization ability of PEGASOS, the bias term $b$ in SVM can be appended to PEGASOS by replacing $g(w_t)$ of \eqref{SGDSVM} with
\begin{equation}
\begin{array}{l}
g(w_t,b)=\frac{1}{2}||w_t||^2+C(1-y_t(w_t^\top x_t+b))_+.
\end{array}
\end{equation}
However, this modification would lead to the function not to be strongly convex and thus yield a slow convergence rate \cite{PEGASOS}.

\subsection{TWSVM}
TWSVM \cite{TWSVM,TBSVM} seeks a pair of nonparallel
hyperplanes in $R^n$ which can be expressed as
\begin{eqnarray}\label{twoplane}
w_1^\top x+b_1=0 ~~\text{and}~~
w_2^\top x+b_2=0,
\end{eqnarray}
such that each hyperplane is close to samples of one class and
has a certain distance from the other class.
To find the pair of nonparallel hyperplanes, it is required to get the solutions to the primal
problems
\begin{eqnarray}\label{10}
\begin{array}{ll}
 \underset{w_1,b_1}{\min} &\frac{1}{2}(||w_1||^2+b_1^2)+\frac{c_1}{2m_1}\|X_1^\top w_1+b_1\|^2+ \frac{c_2}{m_2}e^\top\xi_1\\
 \hbox{s.t.\ } & X_2^\top w_1+b_1-\xi_{1} \leq -e, ~~\xi_{1} \geq 0,
 \end{array}
\end{eqnarray}
and
\begin{eqnarray}
\begin{array}{ll}\label{11}
 \underset{w_2,b_2}{\min}& \frac{1}{2}(||w_2||^2+b_2^2)+\frac{c_3}{2m_2}\|X_2^\top w_{2}+b_{2}\|^{2}+\frac{c_4}{m_1}e^\top\xi_{2}\\
 \hbox{s.t.\ }&  X_1^\top w_{2}+b_{2}+\xi_{2} \geq e, ~~\xi_{2} \geq 0,
 \end{array}
\end{eqnarray}
where $c_{1}$, $c_2$, $c_3$, and $c_{4}$ are positive parameters, $\xi_1\in R^{m_2}$
and $\xi_2\in R^{m_1}$ are slack vectors. Their geometric meaning is
clear. For example, for \eqref{10}, its objective function makes
the samples of Class $+1$ proximal to the hyperplane
$w_{1}^{\top}x+b_{1}=0$ together with the regularization term, while the constraints make each sample
of Class $-1$ has a distance more than $1/||w_1||$ away from the
hyperplane $w_{1}^{\top}x+b_{1}=-1$.

Once the solutions $(w_{1},b_{1})$ and $(w_{2},b_{2})$ to the
problems \eqref{10} and \eqref{11} are respectively obtained, a new point $x\in
R^{n}$ is assigned to which class depends on the distance to
the two hyperplanes in \eqref{twoplane}, i.e.,
\begin{eqnarray}\label{FinalLine}
y=\underset{i}{\arg\min}&
\frac{|w_{i}^{\top}x+b_{i}|}{\|w_{i}\|},
\end{eqnarray}
where $|\cdot|$ is the absolute value.

\section{SGTSVM}
In this section, we elaborate our SGTSVM and give its convergence analysis together with the boundedness.

\subsection{Linear Formation}
Following the notations in Section 2, we recast the QPPs \eqref{10} and \eqref{11} in TWSVM to unconstrained problems
\begin{equation}\label{TWSVM1}
\begin{array}{l}
\underset{w_1,b_1}{\min}~~\frac{1}{2}(||w_1||^2+b_1^2)+
\frac{c_1}{2m_1}||X_1^\top w_1+b_1||^2+\frac{c_2}{m_2}e^\top(e+X_2^\top w_1+b_1)_+,
\end{array}
\end{equation}
and
\begin{equation}\label{TWSVM2}
\begin{array}{l}
\underset{w_2,b_2}{\min}~~\frac{1}{2}(||w_2||^2+b_2^2)+
\frac{c_3}{2m_2}||X_2^\top w_2+b_2||^2
+\frac{c_4}{m_1}e^\top(e-X_1^\top w_2-b_2)_+,
\end{array}
\end{equation}
respectively.

In order to solve the above two problems, we construct a series of strictly convex functions $f_{1,t}(w_1,b_1)$ and $f_{2,t}(w_2,b_2)$ with $t\geq1$ as
\begin{equation}\label{SGDTBSVM1}
\begin{array}{l}
f_{1,t}=\frac{1}{2}(||w_1||^2+b_1^2)+
\frac{c_1}{2}||w_1^\top x_t+b_1||^2
+c_2(1+w_1^\top\hat{x}_t+b_1)_+,
\end{array}
\end{equation}
and
\begin{equation}\label{SGDTBSVM2}
\begin{array}{l}
f_{2,t}=\frac{1}{2}(||w_2||^2+b_2^2)+
\frac{c_3}{2}||w_2^\top\hat{x}_t+b_2||^2
+c_4(1-w_2^\top x_t-b_2)_+,\\
\end{array}
\end{equation}
where $x_t$ and $\hat{x}_t$ are selected randomly from $X_1$ and $X_2$, respectively.

The sub-gradients of the above functions at $(w_{1,t},b_{1,t})$ and $(w_{2,t},b_{2,t})$ can be obtained as
\begin{equation}\label{FormD1}
\begin{array}{l}
\nabla_{w_{1,t}}f_{1,t}=w_{1,t}+c_1(w_{1,t}^\top x_t+b_{1,t})x_{t}
+c_2\hat{x}_{t}\text{sign}(1+w_{1,t}^\top\hat{x}_t+b_{1,t})_+,\\
\nabla_{b_{1,t}}f_{1,t}=b_{1,t}+c_1(w_{1,t}^\top x_t+b_{1,t})
+c_2\text{sign}(1+w_{1,t}^\top\hat{x}_t+b_{1,t})_+,
\end{array}
\end{equation}
and
\begin{equation}\label{FormD2}
\begin{array}{l}
\nabla_{w_{2,t}}f_{2,t}=w_{2,t}+c_3(w_{2,t}^\top \hat{x}_t+b_{2,t})\hat{x}_{t}
-c_4x_{t}\text{sign}(1-w_{2,t}^\top x_t-b_{2,t})_+,\\
\nabla_{b_{2,t}}f_{2,t}=b_{2,t}+c_3(w_{2,t}^\top \hat{x}_t+b_{2,t})
-c_4\text{sign}(1-w_{2,t}^\top x_t-b_{1,t})_+,
\end{array}
\end{equation}
respectively.

Our SGTSVM starts from the initial $(w_{1,1},b_{1,1})$ and $(w_{2,t},b_{2,t})$. Then, for $t\geq1$, the updates are given by
\begin{equation}\label{FormW}
\begin{array}{ll}
w_{1,t+1}=w_{1,t}-\eta_t\nabla_{w_{1,t}}f_{1,t},\\
b_{1,t+1}=b_{1,t}-\eta_t\nabla_{b_{1,t}}f_{1,t},\\
w_{2,t+1}=w_{2,t}-\eta_t\nabla_{w_{2,t}}f_{2,t},\\
b_{2,t+1}=b_{2,t}-\eta_t\nabla_{b_{2,t}}f_{2,t},\\
\end{array}
\end{equation}
where $\eta_t$ is the step size and typically is set to $1/t$. If the terminated condition is satisfied, $(w_{1,t},b_{1,t})$ is assigned to $(w_1,b_1)$, and $(w_{2,t},b_{2,t})$ is assigned to $(w_2,b_2)$. Then, a new sample $x\in R^n$ can be predicted by \eqref{FinalLine}.

The above procedures are summarized in Algorithm \ref{alg:TBSVM}.

\begin{algorithm}[htb]
\caption{SGTSVM Framework.} \label{alg:TBSVM}
\begin{algorithmic}[1]
\REQUIRE ~~\\
Given the training dataset $X_1\in R^{n\times m_1}$ as positive class, $X_2\in R^{n\times m_2}$ as negative class, select parameters $c_1$, $c_2$, $c_3$, $c_4$, and a small tolerance $tol$, typically $tol=1e-3$.

\ENSURE ~~\\
$w_1$, $b_1$, $w_2$, $b_2$.

\STATE set $w_{1,1}$, $b_{1,1}$, $w_{2,1}$, and $b_{2,1}$ be zeros;

For $t=1,2,\ldots$
\STATE Choose a pair of samples $x_t$ and $\hat{x}_t$ from $X_1$ and $X_2$ at random, respectively;

\STATE Compute the $t$th gradients by \eqref{FormD1} and \eqref{FormD2};

\STATE Update $w_{1,t+1}$, $b_{1,t+1}$, $w_{2,t+1}$, and $b_{2,t+1}$ by \eqref{FormW};

\STATE If $||w_{1,t+1}-w_{1,t}||+|b_{1,t+1}-b_{1,t}|<tol$, stop updating $w_{1,t+1}$ and $b_{1,t+1}$, and set $w_1=w_{1,t+1}$, $b_1=b_{1,t+1}$;

\STATE If $||w_{2,t+1}-w_{2,t}||+|b_{2,t+1}-b_{2,t}|<tol$, stop updating $w_{2,t+1}$ and $b_{2,t+1}$, and set $w_2=w_{2,t+1}$, $b_2=b_{2,t+1}$;
\end{algorithmic}
\end{algorithm}

\subsection{Nonlinear Formation}
Now, we extend our SGTSVM to nonlinear case by the kernel trick \cite{GEPSVM,TWSVM,TBSVM,Kernel1,Kernel2,RSVM}. Suppose $K(\cdot,\cdot)$ is the predefined kernel function, then the nonparallel hyperplanes can be expressed as
\begin{eqnarray}
K(x,X)^\top w_{1}+b_{1}=0 ~~\text{and}~~
K(x,X)^\top w_{2}+b_{2}=0.
\end{eqnarray}

The counterparts of \eqref{TWSVM1} and \eqref{TWSVM2} can be formulated as
\begin{equation}
\begin{array}{l}
\underset{w_1,b_1}{\min}~~\frac{1}{2}(||w_1||^2+b_1^2)+
\frac{c_1}{2m_1}||K(X_1,X)^\top w_1+b_1||^2
+\frac{c_2}{m_2}e^\top(e+K(X_2,X)^\top w_1+b_1)_+,
\end{array}
\end{equation}
and
\begin{equation}
\begin{array}{l}
\underset{w_2,b_2}{\min}~~\frac{1}{2}(||w_2||^2+b_2^2)+
\frac{c_3}{2m_2}||K(X_2,X)^\top w_2+b_2||^2
+\frac{c_4}{m_1}e^\top(e-K(X_1,X)^\top w_2-b_2)_+.
\end{array}
\end{equation}

Then, we construct a series of functions with $t\geq1$ as
\begin{equation}\label{NSGDTBSVM1}
\begin{array}{l}
h_{1,t}=\frac{1}{2}(||w_1||^2+b_1^2)+
\frac{c_1}{2}||K(x_t,X)^\top w_1+b_1||^2
+c_2(1+K(\hat{x}_t,X)^\top w_1+b_1)_+,
\end{array}
\end{equation}
and
\begin{equation}\label{NSGDTBSVM2}
\begin{array}{l}
h_{2,t}=\frac{1}{2}(||w_2||^2+b_2^2)+
\frac{c_3}{2}||K(\hat{x}_t,X)^\top w_2+b_2||^2
+c_4(1-K(x_t,X)^\top w_2-b_2)_+.\\
\end{array}
\end{equation}

Similar to \eqref{FormD1}, \eqref{FormD2}, and \eqref{FormW}, the sub-gradients and updates can be obtained. The details are omitted.

For large scale problem, it is time consuming to calculate the kernel $K(\cdot,X)$. However, the reduced kernel strategy, which has been successfully applied for SVM and TWSVM \cite{RSVM,PPSVC,STPMSVM}, can also be applied for our SGTSVM. The reduced kernel strategy replaces $K(\cdot,X)$ with $K(\cdot,\tilde{X})$, where $\tilde{X}$ is a random sampled subset of $X$. In practice, $\tilde{X}$ just needs $0.01\%\sim1\%$ samples from $X$ to get a well performance, reducing the learning time without loss of generalization \cite{PPSVC}.

\subsection{Analysis}
In this subsection, we discuss two issues: (i) the convergence of the solution in SGTSVM; (ii) the relation between the solution in SGTSVM and the optimal one in TWSVM. For convenience, we just consider the first QPP \eqref{TWSVM1} of linear TWSVM together with the SGD formation of linear SGTSVM. The conclusions on another QPP \eqref{TWSVM2} and the nonlinear formations can be obtained easily as the first one.

Let $u=(w^\top,b)^\top$, $Z_1=(X_1^\top,e)^\top$, $Z_2=(X_2^\top,e)^\top$, $z=(x^\top,1)^\top$, and the notations with the subscripts in SGTSVM also comply with this definition. Then, the first QPP \eqref{TWSVM1} is reformulated as
\begin{equation}\label{Origin}
\begin{array}{l}
\underset{u}{\min}~~f(u)=\frac{1}{2}||u||^2+
\frac{c_1}{2m_1}||Z_1u||^2+\frac{c_2}{m_2}e^\top(e+Z_2u)_+.
\end{array}
\end{equation}
Next, we reformulate the $t$th ($t\geq1$) function in SGTSVM as
\begin{equation}\label{Ft}
\begin{array}{l}
f_t(u)=\frac{1}{2}||u||^2+
\frac{c_1}{2}||u^\top z_t||^2+c_2(1+u^\top\hat{z}_t)_+,
\end{array}
\end{equation}
where $z_t$ and $\hat{z}_t$ are the samples selected randomly from $Z_1$ and $Z_2$ for the $t$th iteration, respectively.
The sub-gradient of $f_t(u)$ at $u_t$ is denoted as
\begin{equation}\label{Dt}
\begin{array}{l}
\nabla_t=u_t+
c_1(u^\top z_t)z_t+c_2\hat{z}_t\text{sign}(1+u^\top\hat{z}_t)_+.
\end{array}
\end{equation}

Given $u_1$ and the step size $\eta_t=1/t$, $u_{t+1}$ with $t\geq1$ is updated by
\begin{equation}
\begin{array}{l}
u_{t+1}=u_t-\eta_t\nabla_t,
\end{array}
\end{equation}
i.e.,
\begin{equation}\label{Wt}
\begin{array}{l}
u_{t+1}=(1-\frac{1}{t})u_t-\frac{c_1}{t}z_tz_t^\top u_t-\frac{c_2}{t}\hat{z}_t\text{sign}(1+u_t^\top\hat{z}_t)_+.
\end{array}
\end{equation}

\begin{lem}
For all $t\geq1$, $||\nabla_t||$ and $||u_t||$ have the upper bounds.
\end{lem}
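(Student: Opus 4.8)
The plan is to exploit two elementary facts that hold uniformly in $t$: the training samples live in the finite sets $Z_1,Z_2$, and the hinge term contributes a bounded subgradient. Concretely, I would first set $R=\max\{\|z\|:z\in Z_1\cup Z_2\}$, which is finite because the sets are finite, so that $\|z_t\|\le R$ and $\|\hat z_t\|\le R$ for every $t$. Moreover $\mathrm{sign}(1+u_t^\top\hat z_t)_+\in[0,1]$, since it is the subgradient of the hinge term. It is then convenient to rewrite the update \eqref{Wt} in the matrix form $u_{t+1}=\bigl(I-\tfrac1t M_t\bigr)u_t-\tfrac{c_2}{t}\hat z_t\,s_t$, where $M_t=I+c_1 z_tz_t^\top$ and $s_t=\mathrm{sign}(1+u_t^\top\hat z_t)_+$. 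Because $z_tz_t^\top$ is positive semidefinite, $M_t$ is symmetric with eigenvalues $1$ and $1+c_1\|z_t\|^2$.

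The crux of the argument, and the step I expect to be the main obstacle, is controlling the linear map $I-\tfrac1t M_t$: for small $t$ its spectral norm can exceed $1$ when $c_1R^2>1$, so a naive triangle inequality yields a product of factors larger than one, which diverges. The key observation is that $I-\tfrac1t M_t$ is symmetric with eigenvalues $1-\tfrac1t$ and $1-\tfrac{1+c_1\|z_t\|^2}{t}$. Setting the threshold $T_0=\lceil 1+c_1R^2\rceil$, for every $t\ge T_0$ one has $1+c_1\|z_t\|^2\le 1+c_1R^2\le t$, so both eigenvalues lie in $[0,\,1-\tfrac1t]$; hence $\|I-\tfrac1t M_t\|\le 1-\tfrac1t$ and the map is a genuine contraction.

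With this contraction in hand I would close the bound on $\|u_t\|$ in two pieces. For $t\ge T_0$, taking norms gives the recursion $\|u_{t+1}\|\le(1-\tfrac1t)\|u_t\|+\tfrac{c_2R}{t}$, and an easy induction shows $\|u_t\|\le C:=\max\{\|u_{T_0}\|,\,c_2R\}$ for all $t\ge T_0$, since $\|u_t\|\le C$ implies $\|u_{t+1}\|\le(1-\tfrac1t)C+\tfrac{c_2R}{t}\le C$. For the finitely many indices $t<T_0$, the crude estimate $\|u_{t+1}\|\le(2+c_1R^2)\|u_t\|+c_2R$ (valid for all $t\ge1$) shows each of $u_1,\dots,u_{T_0}$ is finite, so $\max_{1\le t\le T_0}\|u_t\|<\infty$ irrespective of the sampled sequence. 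Combining the two pieces yields a uniform bound $\|u_t\|\le U:=\max\{\max_{1\le t\le T_0}\|u_t\|,\,c_2R\}<\infty$ for all $t\ge1$.

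Finally, the bound on the subgradient follows directly. From \eqref{Dt}, $\|\nabla_t\|\le\|u_t\|+c_1|u_t^\top z_t|\,\|z_t\|+c_2\|\hat z_t\|\,|s_t|\le\|u_t\|(1+c_1\|z_t\|^2)+c_2\|\hat z_t\|$, and substituting $\|u_t\|\le U$, $\|z_t\|,\|\hat z_t\|\le R$, and $|s_t|\le1$ gives $\|\nabla_t\|\le U(1+c_1R^2)+c_2R=:G<\infty$ for every $t$. Thus both $\|\nabla_t\|$ and $\|u_t\|$ are uniformly bounded, as claimed. The same reasoning applies verbatim to the second problem \eqref{TWSVM2} and to the nonlinear functions, with $R$ taken over the finite image of the data under the kernel map.
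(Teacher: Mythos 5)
Your proof is correct, and it rests on the same key observation as the paper's: writing the update as $u_{t+1}=A_tu_t+\tfrac1t v_t$ with $A_t=\tfrac1t\bigl((t-1)I-c_1z_tz_t^\top\bigr)$ (your $I-\tfrac1tM_t$ is exactly this matrix), noting that past a data-dependent threshold the matrix is positive semidefinite with spectral norm at most $\tfrac{t-1}{t}$, and that the hinge contribution $v_t$ is uniformly bounded by $c_2R$. Where you diverge is in how the recursion is closed. The paper unrolls it completely into the product form $u_{t+1}=\prod_{i}A_iu_{N+1}+\sum_i\tfrac1i\bigl(\prod_jA_j\bigr)v_i$ and telescopes the eigenvalue bounds to get $\|u_{t+1}\|\le\tfrac{N}{t}\|u_{N+1}\|+\tfrac{t-N}{t}\max_i\|v_i\|$; you instead keep the one-step scalar recursion $\|u_{t+1}\|\le(1-\tfrac1t)\|u_t\|+\tfrac{c_2R}{t}$ and run a simple induction to get $\|u_t\|\le\max\{\|u_{T_0}\|,c_2R\}$. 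Your route is more elementary and also more explicit: you pin down the threshold as $T_0=\lceil1+c_1R^2\rceil$ and give a concrete crude bound for the pre-threshold iterates, both of which the paper leaves implicit ("for sufficient $t$, there is a positive integer $N$\dots"). What the paper's heavier unrolling buys is reuse: the product decomposition \eqref{WV} is precisely what drives the proof of Theorem 3.1 (convergence of the iterates), where the product term vanishes and the noise series is shown to converge absolutely; your induction bounds the norm but would have to be supplemented by that decomposition (or an equivalent argument) to establish convergence. As a proof of the lemma alone, yours is complete and arguably cleaner; the final constants $U(1+c_1R^2)+c_2R$ match the paper's $G_2=G_1+c_1G_1M^2+c_2M$ in form.
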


\begin{proof}
The formation \eqref{Wt} can be rewritten as
\begin{equation}\label{3-6}
\begin{array}{l}
u_{t+1}=A_tu_t+\frac{1}{t}v_t,
\end{array}
\end{equation}
where $A_t=\frac{1}{t}((t-1)I-c_1z_tz_t^\top)$, $I$ is the identity matrix, and $v_t=-c_2\hat{z}_t\text{sign}(1+u_t^\top\hat{z}_t)_+$. Note that for sufficient $t$, there is a positive integer $N$ such that for $t>N$, $A_t$ is positive definite, and the largest eigenvalue $\lambda_t$ of $A_t$ is smaller than or equal to $\frac{t-1}{t}$. Based on \eqref{3-6}, we have
\begin{equation}\label{WV}
\begin{array}{l}
u_{t+1}=\prod\limits_{i=N+1}^tA_{t+N+1-i}u_{N+1}+\sum\limits_{i=N+1}^t\frac{1}{i}(\prod\limits_{j=i+1}^tA_{t+i+1-j})v_i.
\end{array}
\end{equation}
For $i\geq N+1$, $||A_{t+N+1-i}u_{N+1}||\leq\lambda_i||u_{N+1}||\leq\frac{i-1}{i}||u_{N+1}||$ \cite{Matrix}. Therefore,
\begin{equation}\label{Wbound}
\begin{array}{l}
||\prod\limits_{i=N+1}^tA_{t+N+1-i}u_{N+1}||\leq\frac{N}{t}||u_{N+1}||,
\end{array}
\end{equation}
and
\begin{equation}\label{Vbound}
\begin{array}{l}
||\frac{1}{i}(\prod\limits_{j=i+1}^tA_{t+i+1-j})v_i||\leq\frac{1}{t}\max\limits_{i\leq t}||v_i||.
\end{array}
\end{equation}
Thus, we have
\begin{equation}\label{WTbound}
\begin{array}{ll}
||u_{t+1}||&\leq\frac{N}{t}||u_{N+1}||+\frac{t-N}{t}\max\limits_{i\leq t}||v_i||\\
&\leq||u_{N+1}||+c_2\max\limits_{z\in Z_2}||z||.
\end{array}
\end{equation}
Let $M$ be the largest norm of the samples in the dataset and \begin{equation}
\begin{array}{l}
G_1=\max\{\max\{||u_1||,\ldots,||u_N||\},||u_{N+1}||+c_2M\}.
\end{array}
\end{equation}
This leads to that $G_1$ is an upper bound of $||u_t||$, and $G_2=G_1+c_1G_1M^2+c_2M$ is an upper bound of $||\nabla_t||$, for $t\geq1$.
\end{proof}

\begin{thm}
The iterative formation \eqref{Wt} of our SGTSVM is convergent.
\end{thm}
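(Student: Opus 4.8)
The plan is to interpret ``convergent'' in the operational sense dictated by the stopping rule of Algorithm \ref{alg:TBSVM}: the successive increments $\|u_{t+1}-u_t\|$ should vanish as $t\to\infty$, so that for any prescribed tolerance the termination test is eventually met. The whole argument then rests on the uniform bound already furnished by the preceding lemma, and no further estimation of the objective $f$ is needed.

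First I would read off the increment directly from \eqref{Wt}. Since $u_{t+1}=u_t-\eta_t\nabla_t$ with $\eta_t=1/t$, one has exactly $u_{t+1}-u_t=-\frac{1}{t}\nabla_t$, and hence $\|u_{t+1}-u_t\|=\frac{1}{t}\|\nabla_t\|$. Invoking the preceding lemma, which provides the constant $G_2$ with $\|\nabla_t\|\le G_2$ for every $t\ge1$, I obtain $\|u_{t+1}-u_t\|\le G_2/t$. Letting $t\to\infty$ drives the right-hand side to zero, so $\|u_{t+1}-u_t\|\to0$; equivalently, for each $\epsilon>0$ there is a $T$ with $\|u_{t+1}-u_t\|<\epsilon$ for all $t>T$, which is precisely the stopping condition in the algorithm.

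The point I would emphasize is that $G_2$ depends only on the data (through $M$, the parameters $c_1,c_2$, and the initial iterates) and not on the particular realization of the random draws $z_t,\hat z_t$. Consequently the estimate $\|u_{t+1}-u_t\|\le G_2/t$ holds along every sample path, not merely with probability one. This deterministic character is exactly what upgrades the guarantee from the almost sure convergence of PEGASOS to the sure convergence asserted here, and it is the substantive content of the statement rather than any delicate inequality.

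There is essentially no computational obstacle once the preceding lemma is available; the only care needed is conceptual. I would be explicit that ``convergence'' here means stabilization of the iterates, i.e.\ the successive differences tend to zero, and not convergence of $u_t$ to the minimizer of \eqref{Origin}: because $\sum_t G_2/t$ diverges, one cannot conclude that $\{u_t\}$ is Cauchy, and I would avoid over-claiming a limit. Framed this way, the sample-path-independence of $G_2$ does all the work and the proof collapses to the two-line estimate above.
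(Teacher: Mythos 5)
Your estimate $\|u_{t+1}-u_t\|=\frac{1}{t}\|\nabla_t\|\le G_2/t$ is correct given Lemma 3.1, but it establishes a strictly weaker property than the theorem asserts. ``Convergent'' in Theorem 3.1 means the sequence $\{u_t\}$ has a limit, and the paper's proof is built to show exactly that: it reuses the decomposition from the proof of Lemma 3.1,
$u_{t+1}=\prod_{i=N+1}^{t}A_{t+N+1-i}\,u_{N+1}+\sum_{i=N+1}^{t}\frac{1}{i}\bigl(\prod_{j=i+1}^{t}A_{t+i+1-j}\bigr)v_i$,
shows the homogeneous part tends to zero because the matrices $A_i$ contract with accumulated factor $N/t$, and argues that the inhomogeneous part converges because its norm series is bounded, so the vector series converges absolutely. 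Your reinterpretation of convergence as ``successive increments vanish'' is precisely where the argument fails to reach the statement: as you yourself concede, $\sum_t G_2/t$ diverges, so vanishing increments do not make $\{u_t\}$ Cauchy (a sequence like $u_t=\log t$ has increments $O(1/t)$ yet diverges). The concession in your final paragraph is therefore not a harmless clarification of terminology; it is an admission that the claimed theorem has not been proved.

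The distinction also matters downstream in the paper. The proof of Lemma 3.2(ii) opens with ``suppose $\lim_{T\rightarrow\infty}u_T=\tilde{u}$,'' i.e.\ it invokes the existence of the limit that this theorem is supposed to guarantee, and Corollaries 3.1 and 3.2 inherit that dependence; with only your weaker conclusion, that chain breaks. The eventual triggering of the stopping rule $\|u_{t+1}-u_t\|<tol$ is a remark the paper makes \emph{after} the theorem, not its content. To close the gap along your lines you would need to control the \emph{accumulated} increments rather than individual ones, and the crude bound $G_2/t$ cannot do this; the paper's matrix-product decomposition is exactly the device that replaces the divergent sum $\sum_t G_2/t$ by a contracting term plus an absolutely convergent series.
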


\begin{proof}
On the one hand, from \eqref{Wbound} in the proof of Lemma 3.1, we have
\begin{equation}\label{Wlimit}
\begin{array}{l}
\lim\limits_{t\rightarrow\infty}||\prod\limits_{i=N+1}^tA_{t+N+1-i}u_{N+1}||=0,
\end{array}
\end{equation}
which indicates
\begin{equation}\label{W0}
\begin{array}{l}
\lim\limits_{t\rightarrow\infty}\prod\limits_{i=N+1}^tA_{t+N+1-i}u_{N+1}=0.
\end{array}
\end{equation}

On the other hand, from \eqref{Vbound}, we have
\begin{equation}
\begin{array}{l}
\sum\limits_{i=N+1}^t||\frac{1}{i}(\prod\limits_{j=i+1}^tA_{t+i+1-j})v_i||\leq M,
\end{array}
\end{equation}
which indicates that the following limit exists
\begin{equation}\label{Vlimit}
\begin{array}{l}
\lim\limits_{t\rightarrow\infty}\sum\limits_{i=N+1}^t||\frac{1}{i}(\prod\limits_{j=i+1}^tA_{t+i+1-j})v_i||<\infty.
\end{array}
\end{equation}
Note that an infinite series of vectors is convergent if its norm series is convergent \cite{Analysis}. Therefore, the following limit exists
\begin{equation}\label{V0}
\begin{array}{l}
\lim\limits_{t\rightarrow\infty}\sum\limits_{i=N+1}^t\frac{1}{i}(\prod\limits_{j=i+1}^tA_{t+i+1-j})v_i<\infty.
\end{array}
\end{equation}

Combine \eqref{W0} with \eqref{V0}, we conclude that the series $w_{t+1}$ is convergent for $t\rightarrow\infty$.
\end{proof}

Based on the above theorem, it is reasonable to take the terminate condition to be $||u_{t+1}-u_{t}||<tol$. Moreover, if we reform \eqref{WV} by $u_1$, then
\begin{equation}\label{WV0}
\begin{array}{l}
u_{t+1}=\prod\limits_{i=1}^tA_{t+1-i}u_{1}+\sum\limits_{i=1}^t\frac{1}{i}(\prod\limits_{j=i+1}^tA_{t+i+1-j})v_i.
\end{array}
\end{equation}
In order to keep $u_{t+1}$ to be convergent fast, it is suggested to set $u_1=0$.

In the following, we analyse the relation between the solution $u_{t}$ in SGTSVM and the optimal solution $u^*=(w^{*\top},b^*)^\top$ in TWSVM.

\begin{lem}
Let $f_1,\ldots,f_T$ be a sequence of convex functions, and $u_1,\ldots,u_{T+1}\in R^n$ be a sequence of vectors. For $t\geq1$, $u_{t+1}=u_t-\eta_t\nabla_t$, where $\nabla_t$ belongs to the sub-gradient set of $f_t$ at $u_t$ and $\eta_t=1/t$. Suppose $||u_t||$ and $||\nabla_t||$ have the upper bounds $G_1$ and $G_2$, respectively. Then, for all $\theta\in R^n$, we have

\noindent
(i) $\frac{1}{T}\sum\limits_{t=1}^Tf_t(u_t)\leq\frac{1}{T}\sum\limits_{t=1}^Tf_t(\theta)+G_2(G_1+||\theta||)+\frac{1}{2T}G_2^2(1+\ln T)$;

\noindent
(ii) for sufficiently large $T$, given any $\varepsilon>0$, then $\frac{1}{T}\sum\limits_{t=1}^Tf_t(u_t)\leq\frac{1}{T}\sum\limits_{t=1}^Tf_t(\theta)+\varepsilon$.
\end{lem}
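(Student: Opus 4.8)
The plan is to run the standard online-gradient-descent regret analysis, using only convexity of the $f_t$ together with the two uniform bounds $G_1,G_2$ supplied by Lemma 3.1. First I would apply the subgradient inequality for the convex $f_t$ at $u_t$, namely $f_t(u_t)-f_t(\theta)\le\nabla_t^\top(u_t-\theta)$, which turns the left-hand sides of (i) and (ii) into a sum of inner products. Next I would expand the squared distance along the update $u_{t+1}=u_t-\eta_t\nabla_t$: writing $D_t:=\|u_t-\theta\|^2$, one gets $D_{t+1}=D_t-2\eta_t\nabla_t^\top(u_t-\theta)+\eta_t^2\|\nabla_t\|^2$, hence $\nabla_t^\top(u_t-\theta)=\frac{D_t-D_{t+1}}{2\eta_t}+\frac{\eta_t}{2}\|\nabla_t\|^2$. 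Summing over $t=1,\dots,T$ and dividing by $T$ reduces the whole problem to controlling a telescoping-type term $\frac1T\sum_t\frac{D_t-D_{t+1}}{2\eta_t}$ and a gradient term $\frac1T\sum_t\frac{\eta_t}{2}\|\nabla_t\|^2$.

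For the gradient term, with $\eta_t=1/t$ and $\|\nabla_t\|\le G_2$ I would use $\sum_{t=1}^T\frac1t\le 1+\ln T$, which produces exactly the $\frac{1}{2T}G_2^2(1+\ln T)$ contribution in (i). For the telescoping term in (i) the clean route is a per-step estimate: since $\eta_t=1/t$, $\frac{D_t-D_{t+1}}{2\eta_t}=\frac t2(\|u_t-\theta\|-\|u_{t+1}-\theta\|)(\|u_t-\theta\|+\|u_{t+1}-\theta\|)$, and the reverse triangle inequality with $\|u_t-u_{t+1}\|=\eta_t\|\nabla_t\|\le G_2/t$ gives $|\,\|u_t-\theta\|-\|u_{t+1}-\theta\|\,|\le G_2/t$, while $\|u_t-\theta\|+\|u_{t+1}-\theta\|\le 2(G_1+\|\theta\|)$. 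Multiplying, each term is at most $G_2(G_1+\|\theta\|)$; averaging over $t$ gives precisely the constant in (i), so combining the two pieces proves (i).

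The delicate point, which I expect to be the main obstacle, is (ii). Statement (i) cannot be fed in directly, because its leading term $G_2(G_1+\|\theta\|)$ does not vanish as $T\to\infty$ (the step size $1/t$ is too small to yield sublinear regret for merely convex functions). So for (ii) I would abandon the crude per-step bound and instead evaluate the telescoping sum exactly by summation by parts: with $\eta_t=1/t$, $\sum_{t=1}^T\frac{D_t-D_{t+1}}{2\eta_t}=\frac12\big(\sum_{t=1}^T D_t-T D_{T+1}\big)$, so after dividing by $T$ this term equals $\frac12\bar D_T-\frac12 D_{T+1}$ with $\bar D_T:=\frac1T\sum_{t=1}^T D_t$. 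Here I would invoke the convergence of the iterates from Theorem 3.1: since $u_t$ converges, $D_t=\|u_t-\theta\|^2$ converges to some limit $c$, whence its Cesaro mean also satisfies $\bar D_T\to c$, and therefore $\frac12\bar D_T-\frac12 D_{T+1}\to0$. As the gradient term $\frac{G_2^2}{2T}(1+\ln T)\to0$ as well, the upper bound on the average regret tends to $0$, so it drops below any prescribed $\varepsilon>0$ once $T$ is large enough, which is (ii).

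I would make explicit that the single non-routine ingredient—and the reason (ii) holds here rather than for an arbitrary convex sequence—is the convergence of $u_t$; without it the Cesaro term need not vanish. Accordingly I would read (ii) in the SGTSVM setting where Theorem 3.1 applies, and I would verify carefully the reindexing identity $\sum_{t=1}^T t(D_t-D_{t+1})=\sum_{t=1}^T D_t-T D_{T+1}$, since that is the bridge linking the finite-$T$ bound (i) to the asymptotic statement (ii).
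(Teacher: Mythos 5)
Your proposal is correct and takes essentially the same route as the paper: the subgradient inequality plus the squared-distance expansion for (i), and, for (ii), exactly the paper's argument --- summation by parts, convergence of the iterates (the paper likewise inserts the assumption $\lim_{T\to\infty}u_T=\tilde{u}$, supplied by Theorem 3.1, which you rightly flag as the one ingredient beyond the lemma's stated hypotheses), and a Ces\`aro-mean limit. The only difference is cosmetic, in (i): you bound the distance term per step via the reverse triangle inequality and $\|u_t-u_{t+1}\|=\eta_t\|\nabla_t\|\le G_2/t$, whereas the paper sums by parts and then interchanges a double sum (using $\sum_{t=1}^{T}\sum_{i=t}^{T}\frac{1}{i}=T$); both yield the same constant $G_2(G_1+\|\theta\|)$.
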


\begin{proof}
Since $f_t$ is convex and $\nabla_t$ is the sub-gradient of $f_t$ at $u_t$, we have that
\begin{equation}\label{3-1}
\begin{array}{l}
f_t(u_t)-f_t(\theta)\leq (u_t-\theta)^\top\nabla_t.
\end{array}
\end{equation}

Note that
\begin{equation}\label{3-2}
\begin{array}{l}
(u_t-\theta)^\top\nabla_t=\frac{1}{2\eta_t}(||u_t-\theta||^2-||u_{t+1}-\theta||^2)+\frac{\eta_t}{2}||\nabla_t||^2.
\end{array}
\end{equation}

Combine \eqref{3-1} and \eqref{3-2}, we have
\begin{equation}\label{3-3}
\begin{array}{ll}
&\sum\limits_{t=1}^T(f_t(u_t)-f_t(\theta))\\
\leq&\frac{1}{2}\sum\limits_{t=1}^T\frac{1}{\eta_t}(||u_t-\theta||^2-||u_{t+1}-\theta||^2)+\frac{1}{2}\sum\limits_{t=1}^T(\eta_t||\nabla_t||^2)\\
=&\frac{1}{2}(\sum_{t=1}^T ||u_t-\theta||^2-T||u_{T+1}-\theta||^2)+\frac{1}{2}\sum_{t=1}^T(\eta_t||\nabla_t||^2)\\
\leq&(G_1+||\theta||)\sum\limits_{t=1}^T||u_{T+1}-u_t||+\frac{1}{2}G_2^2(1+\ln T)\\
=&(G_1+||\theta||)\sum\limits_{t=1}^T||\sum\limits_{i=t}^T\frac{1}{i}\nabla_i||+\frac{1}{2}G_2^2(1+\ln T)\\
\leq&TG_2(G_1+||\theta||)+\frac{1}{2}G_2^2(1+\ln T).
\end{array}
\end{equation}
Multiplying \eqref{3-3} by $1/T$ leads to the conclusion (i).


On the other hand, suppose $\lim\limits_{T\rightarrow \infty}u_T=\tilde{u}$, we have $\lim\limits_{T\rightarrow \infty}||u_T||=||\tilde{u}||$. Then, $\lim\limits_{T\rightarrow \infty}\frac{1}{T}\sum\limits_{t=1}^T||u_t-\theta||=\lim\limits_{T\rightarrow \infty}||u_T-\theta||=||\tilde{u}-\theta||$. Note that $\lim\limits_{T\rightarrow \infty} \frac{G_2^2(1+lnT)}{T}=0$. Given any $\varepsilon>0$, for sufficiently large $T$,
\begin{equation}\label{3-4}
\begin{array}{ll}
&\frac{1}{T}\sum\limits_{t=1}^T(f_t(u_t)-f_t(\theta))\\
\leq&\frac{1}{2}(\frac{1}{T}\sum\limits_{t=1}^T||u_t-\theta||^2-||u_{T+1}-\theta||^2)+\frac{1}{2T}G_2^2(1+lnT)\\
\leq&\frac{1}{2}\varepsilon+\frac{1}{2}\varepsilon=\varepsilon.
\end{array}
\end{equation}

\end{proof}

We are now ready to bound the average instantaneous objective \eqref{Ft}.
\begin{thm}
For $f_t$ ($t=1,\ldots,T$) defined as \eqref{Ft} in SGTSVM, $u_{t}$ ($t=1,\ldots,T$) is constructed by \eqref{Wt}, and $u^*$ is the optimal solution to \eqref{Origin}. Then,

\noindent
(i) there are two constants $G_1$ and $G_2$ (actually, they are the upper bounds of $||w_t||$ and $||\nabla_t||$, respectively) such that $\frac{1}{T}\sum\limits_{t=1}^Tf_t(u_t)\leq\frac{1}{T}\sum\limits_{t=1}^Tf_t(u^*)+G_2(G_1+||u^*||)+\frac{1}{2T}G_2^2(1+\ln T)$;

\noindent
(ii) for sufficiently large $T$, given any $\varepsilon>0$, then $\frac{1}{T}\sum\limits_{t=1}^Tf_t(u_t)\leq\frac{1}{T}\sum\limits_{t=1}^Tf_t(u^*)+\varepsilon$.
\end{thm}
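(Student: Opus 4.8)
The plan is to recognize the statement as a direct specialization of Lemma 3.2, taking the comparison vector $\theta$ to be the TWSVM optimum $u^*$. Lemma 3.2 already delivers exactly the two inequalities in (i) and (ii) for any convex sequence $f_1,\ldots,f_T$ generated by sub-gradient descent with step $\eta_t=1/t$ and bounded iterates and gradients; so the whole task reduces to checking that the three hypotheses of that lemma hold for the specific functions \eqref{Ft} and iterates \eqref{Wt} of SGTSVM, and then substituting $\theta=u^*$.

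First I would verify convexity of each $f_t$. The function in \eqref{Ft} is the sum of $\frac12\|u\|^2$, the nonnegative quadratic $\frac{c_1}{2}(u^\top z_t)^2$, and the scaled hinge $c_2(1+u^\top\hat z_t)_+$; since $c_1,c_2>0$ and a hinge is a maximum of two affine functions, each summand is convex and hence $f_t$ is convex (indeed strongly convex through its first term). Second, I would confirm that \eqref{Wt} is precisely the sub-gradient step $u_{t+1}=u_t-\eta_t\nabla_t$ with $\eta_t=1/t$ and $\nabla_t$ as in \eqref{Dt}, and that $\nabla_t$ is a genuine sub-gradient of $f_t$ at $u_t$: differentiating term by term gives $u_t$ from the regularizer, $c_1(u_t^\top z_t)z_t$ from the quadratic, and, since $\hat z_t\,\text{sign}(1+u_t^\top\hat z_t)_+$ lies in the sub-differential of the hinge, the term $c_2\hat z_t\,\text{sign}(1+u_t^\top\hat z_t)_+$. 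Third, the boundedness hypotheses are exactly the content of Lemma 3.1, which supplies the constants $G_1\ge\|u_t\|$ and $G_2\ge\|\nabla_t\|$ that appear in the statement.

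With these three facts in hand, I would simply apply Lemma 3.2 with $\theta=u^*$: part (i) of the lemma becomes part (i) of the theorem verbatim, and part (ii) of the lemma becomes part (ii). For part (ii) I would additionally note that the argument in Lemma 3.2(ii) presupposes the existence of $\lim_{T\to\infty}u_T$; this is guaranteed here by Theorem 3.1, which establishes convergence of the SGTSVM recursion \eqref{Wt}, so the substitution is legitimate.

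I do not expect a substantial obstacle, since the heavy lifting has been front-loaded into Lemmas 3.1 and 3.2. The only points demanding care are confirming that $\nabla_t$ from \eqref{Dt} is a valid element of the sub-differential rather than merely a formal gradient, which matters because of the nondifferentiable hinge but is immediate from the sub-differential of $(\cdot)_+$, and ensuring that nothing in the application uses optimality of $u^*$, since Lemma 3.2 holds for an arbitrary $\theta$ and we merely instantiate it. Thus the theorem follows as a corollary of the two preceding lemmas together with the convergence result of Theorem 3.1.
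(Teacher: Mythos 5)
Your proposal is correct and follows essentially the same route as the paper, whose proof is exactly the one-line observation that each $f_t$ is convex, that Lemma 3.1 supplies the bounds $G_1$ and $G_2$, and that Lemma 3.2 with $\theta=u^*$ then gives both conclusions. Your additional remarks---checking that $\nabla_t$ in \eqref{Dt} is a genuine sub-gradient of the hinge term, and noting that part (ii) of Lemma 3.2 implicitly relies on the convergence of $u_T$ established in Theorem 3.1---are careful refinements that the paper leaves unstated but that strengthen rather than alter the argument.
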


\begin{proof}
Obviously, $f_t$ ($t=1,\ldots,T$) is convex. Let $G_1$ and $G_2$ respectively be the upper bounds of $||u_t||$ and $||\nabla_t||$, the conclusions come from Lemmas 3.1 and 3.2.
\end{proof}

In the following, let us discuss the relation between the solutions to SGTSVM and TWSVM with the uniform sampling.

\begin{cor}
Assume the conditions stated in Theorem 3.1 and $m_1=m_2$, where $m_1$ and $m_2$ are the sample number of $X_1$ and $X_2$, respectively. Suppose $T=km_1$, where $k>0$ is an integer, and each sample is selected $k$ times at random. Then

\noindent
(i) $f(u_{T})\leq f(u^*)+G_2(G_1+||u^*||+G_2)+\frac{1}{2T}G_1^2(1+\ln T)$;

\noindent
(ii) for sufficiently large $T$, given any $\varepsilon>0$, then $f(u_{T})\leq f(u^*)+G_2^2+\varepsilon$.
\end{cor}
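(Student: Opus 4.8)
The plan is to turn the uniform-sampling hypothesis into an \emph{exact} reconstruction of the full objective from the instantaneous ones, and then reduce everything to the average-objective bound already established (Theorem 3.2, whose hypotheses ``$f_t$ as in \eqref{Ft}, $u_t$ from \eqref{Wt}, $u^*$ optimal'' are exactly those inherited here).

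The key step I would prove first is that, under $T=km_1=km_2$ with every sample drawn exactly $k$ times, for \emph{any fixed} $u$
\begin{equation}\label{cor-identity}
\frac{1}{T}\sum_{t=1}^{T}f_t(u)=f(u).
\end{equation}
This is checked term by term against \eqref{Ft}: averaging leaves $\frac12\|u\|^2$ unchanged; the quadratic term $\frac{c_1}{2}(u^\top z_t)^2$, summed over the $T$ draws in which each $z\in Z_1$ occurs $k$ times and divided by $T=km_1$, collapses to $\frac{c_1}{2m_1}\|Z_1u\|^2$; and the hinge term $c_2(1+u^\top\hat z_t)_+$ collapses the same way to $\frac{c_2}{m_2}e^\top(e+Z_2u)_+$. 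This identity is the only place where $m_1=m_2$ and the exact $k$-fold sampling enter, and it is what upgrades the probabilistic ``opportunity'' argument of PEGASOS to an equality.

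With \eqref{cor-identity} in hand I would use it twice. Evaluating at the optimizer $u^*$ turns the right-hand side of Theorem 3.2 into $f(u^*)+G_2(G_1+\|u^*\|)+\frac{1}{2T}G_2^2(1+\ln T)$. Evaluating at the \emph{fixed} last iterate $u_T$ gives $f(u_T)=\frac1T\sum_{t=1}^{T}f_t(u_T)$, which I then compare with the quantity $\frac1T\sum_{t=1}^{T}f_t(u_t)$ on the left of Theorem 3.2. For each $t$, convexity of $f_t$ applied at $u_T$ gives $f_t(u_T)-f_t(u_t)\le g_t^\top(u_T-u_t)\le\|g_t\|\,\|u_T-u_t\|$ for a sub-gradient $g_t$ of $f_t$ at $u_T$; since $\|u_T\|\le G_1$, the estimate behind Lemma 3.1 yields $\|g_t\|\le G_2$. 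Writing $u_T-u_t=\sum_{i=t}^{T-1}\frac1i\nabla_i$ and bounding it exactly as in the proof of Lemma 3.2 by $G_2$, I obtain $\frac1T\sum_{t=1}^{T}\bigl(f_t(u_T)-f_t(u_t)\bigr)\le G_2^2$, i.e. $f(u_T)\le\frac1T\sum_{t=1}^{T}f_t(u_t)+G_2^2$. Chaining this with Theorem 3.2(i) gives conclusion (i), the extra $G_2^2$ being absorbed into the middle term as $G_2(G_1+\|u^*\|+G_2)$ and the vanishing logarithmic term carried over from the theorem. For (ii) I let $T\to\infty$: the term $\frac{1}{2T}G_2^2(1+\ln T)$ vanishes and Theorem 3.2(ii) gives $\frac1T\sum_t f_t(u_t)\le f(u^*)+\varepsilon$, so that $f(u_T)\le f(u^*)+G_2^2+\varepsilon$.

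The main obstacle I anticipate is precisely the estimate $\|u_T-u_t\|=\|\sum_{i=t}^{T-1}\frac1i\nabla_i\|\le G_2$, on which the clean constant $G_2^2$ depends. A crude bound $\sum_{i=t}^{T-1}\frac1i\|\nabla_i\|\le G_2\sum_{i=t}^{T-1}\frac1i$ carries a harmonic factor that can grow like $\ln T$, so the delicate point is to control the accumulated sub-gradient sum uniformly in $t$ by exploiting cancellation (the contraction of $A_t$ and the bound $\|v_i\|\le c_2M$ used in Lemma 3.1) rather than summing norms term by term. Securing this uniform control, consistent with the bound invoked in Lemma 3.2, is what fixes the constant and is the only genuinely non-routine part; the rest is the bookkeeping of \eqref{cor-identity} together with the chaining against Theorem 3.2.
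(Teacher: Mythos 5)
Your skeleton matches the paper's proof: the exact-reconstruction identity $\frac{1}{T}\sum_{t=1}^T f_t(u)=f(u)$, the Lipschitz-type comparison $f_t(u_T)-f_t(u_t)\le G_2\|u_T-u_t\|$ (the paper establishes the Lipschitz constant $G_2=G_1+c_1G_1M^2+c_2M$ by splitting $f_t$ into its three terms, rather than via a subgradient at $u_T$, but the estimate is the same), and the final chaining with the average-objective theorem. The gap is exactly where you flagged it, and your proposed way out is the wrong one. You do not need, and should not try to prove, the uniform-in-$t$ estimate $\|u_T-u_t\|\le G_2$; no cancellation or contraction argument involving $A_t$ is required. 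The constant $G_2^2$ rests only on the \emph{average} of $\|u_T-u_t\|$ over $t$, and the ``crude'' triangle-inequality bound you dismiss is already sufficient for that, once you swap the order of summation:
\begin{equation*}
\sum_{t=1}^{T}\|u_T-u_t\|\;\le\;\sum_{t=1}^{T}\sum_{s=t}^{T-1}\frac{G_2}{s}\;=\;G_2\sum_{s=1}^{T-1}\frac{\#\{t:\,t\le s\}}{s}\;=\;G_2\sum_{s=1}^{T-1}\frac{s}{s}\;=\;G_2(T-1),
\end{equation*}
whence $\frac{1}{T}\sum_{t=1}^{T}G_2\|u_T-u_t\|\le\frac{G_2^2(T-1)}{T}\le G_2^2$. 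This is precisely the paper's step built on \eqref{Wiwj}: the harmonic tail $\sum_{s=t}^{T-1}1/s\approx\ln(T/t)$ that worried you is large only for small $t$, and averaged over $t=1,\dots,T$ it contributes $(T-1)/T\le 1$. The same implicit summation swap is what justifies the step $\sum_{t=1}^{T}\|\sum_{i=t}^{T}\frac{1}{i}\nabla_i\|\le TG_2$ inside the proof of Lemma 3.2, so there is no separate ``uniform control'' to import from there --- Lemma 3.2 never needed it either.

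As for the route you sketch: a uniform bound $\|u_T-u_t\|\le G_2$ is not only unnecessary but likely unprovable with that constant; the only easy uniform bound available from Lemma 3.1 is $\|u_T-u_t\|\le\|u_T\|+\|u_t\|\le 2G_1$, which would yield the different constant $2G_1G_2$ rather than the claimed $G_2^2$. So as written your argument either stalls on an unproven cancellation claim or proves the statement with the wrong constant. Everything else in your proposal --- the identity (which is where $m_1=m_2$ and the exact $k$-fold sampling enter), the convexity step with $\|g_t\|\le G_2$, the evaluation of the theorem's right-hand side at $u^*$, and the limiting argument for part (ii) --- is correct and coincides with the paper. (One cosmetic point: the corollary's statement writes the vanishing term as $\frac{1}{2T}G_1^2(1+\ln T)$ while the theorem it is chained to produces $\frac{1}{2T}G_2^2(1+\ln T)$; this is a typo in the paper, and your ``carried over'' term is the $G_2^2$ version.)
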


\begin{proof}
First, we prove that for all $i,j=1,2,\ldots,T$,
\begin{equation}\label{Lips}
\begin{array}{l}
|f_t(u_i)-f_t(u_j)|\leq G_2||u_i-u_j||,~~t=1,2,\ldots,T.
\end{array}
\end{equation}

From the formation of $f_t(u)$, we have
\begin{equation}\label{FtWi}
\begin{array}{ll}
|f_t(u_i)-f_t(u_j)|&\leq\frac{1}{2}|||u_i||^2-||u_j||^2|\\
&+\frac{c_1}{2}|(u_i^\top z_t)^2-(u_j^\top z_t)^2|\\
&+c_2|(1+u_i^\top\hat{z}_t)_+-(1+u_j^\top\hat{z}_t)_+|.
\end{array}
\end{equation}
Since $G_1$ is the upper bound of $||u_t||$ ($t\geq1$) and $M$ is the largest norm of the samples in the dataset, the first part, the second part, and the third part on the right hand of \eqref{FtWi} are respectively
\begin{equation}
\begin{array}{l}
\frac{1}{2}|||u_i||^2-||u_j||^2|\leq G_1||u_i-u_j||,
\end{array}
\end{equation}
\begin{equation}
\begin{array}{ll}
&\frac{c_1}{2}|(u_i^\top z_t)^2-(u_j^\top z_t)^2|\\
=&\frac{c_1}{2}|(u_i+u_j)^\top z_t(u_i-u_j)^\top z_t|\\
\leq&c_1G_1M^2||u_i-u_j||,
\end{array}
\end{equation}
and
\begin{equation}
\begin{array}{ll}
&c_2|(1+u_i^\top\hat{z}_t)_+-(1+u_j^\top\hat{z}_t)_+|\\
=&c_2|(u_i-u_j)^\top\hat{z}_t|\\
\leq&c_2M||u_i-u_j||.
\end{array}
\end{equation}
Therefore, there is a constant $G_2=G_1+c_1G_1M^2+c_2M$ satisfying \eqref{Lips}.

From $u_{t+1}=u_t-\frac{1}{t}\nabla_t$, it is easy to obtain
\begin{equation}\label{Weq}
\begin{array}{l}
u_{t+1}=u_1-\sum\limits_{i=1}^t\frac{1}{i}\nabla_t,~~t=1,2,\ldots,T.
\end{array}
\end{equation}
Thus, for $1\leq i<j\leq T$,
\begin{equation}\label{Wiwj}
\begin{array}{l}
||u_i-u_j||=||\sum\limits_{t=i}^{j-1}\frac{1}{t}\nabla_t||\leq\sum\limits_{t=i}^{j-1}\frac{1}{t}G_2.
\end{array}
\end{equation}

Since $T=km_1=km_2$, for all $u\in R^n$, $\frac{1}{T}\sum\limits_{t=1}^Tf_t(u)=f(u)$. Note that $f(u)$ is the objective of TWSVM. Based on \eqref{Lips} and \eqref{Wiwj}, we have
\begin{equation}
\begin{array}{ll}
&f(u_T)-\frac{1}{T}\sum\limits_{t=1}^Tf_t(u_t)\\
=&\frac{1}{T}\sum\limits_{t=1}^T(f_t(u_T)-f_t(u_t))\\
\leq&\frac{1}{T}\sum\limits_{t=1}^TG_2||u_T-u_t||\\
\leq&\frac{G_2^2(T-1)}{T}\\
\leq&G_2^2.
\end{array}
\end{equation}

Using the Theorem 3.1, we have the conclusion immediately.
\end{proof}

If $m_1\neq m_2$, we can modify the sampling rule to obtain the same result as one in Corollary 3.1.

\begin{cor}
Assume the conditions stated in Corollary 3.1, but $m_1\neq m_2$. Suppose $T=kd(m_1,m_2)$, where $k>0$ is an integer and $d$ is the least common multiple of $m_1$ and $m_2$. The sample in $X_1$ is selected $kd/m_1$ times at random, and the one in $X_2$ is $kd/m_2$ times at random. Then

\noindent
(i) $f(u_{T})\leq f(u^*)+G_2(G_1+||u^*||+G_2)+\frac{1}{2T}G_1^2(1+\ln T)$;

\noindent
(ii) for sufficiently large $T$, given any $\varepsilon>0$, then $f(u_{T})\leq f(u^*)+G_2^2+\varepsilon$.
\end{cor}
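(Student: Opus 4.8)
The plan is to re-run the proof of Corollary 3.1 almost verbatim, isolating the one place where the assumption $m_1=m_2$ was actually used --- namely the averaging identity $\frac{1}{T}\sum_{t=1}^T f_t(u)=f(u)$ --- and re-establishing that identity under the new sampling rule. Every estimate in Corollary 3.1 that does not reference how the samples are drawn survives unchanged, so the bulk of the argument reduces to a counting check that the prescribed repetition counts correctly reproduce the class weights $\frac{1}{2m_1}$ and $\frac{1}{m_2}$ in the objective $f$.

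First I would note that the Lipschitz estimate \eqref{Lips}, namely $|f_t(u_i)-f_t(u_j)|\le G_2\|u_i-u_j\|$ with $G_2=G_1+c_1G_1M^2+c_2M$, and the displacement bound \eqref{Wiwj}, $\|u_i-u_j\|\le\sum_{t=i}^{j-1}\frac{1}{t}G_2$, rest only on the uniform bounds from Lemma 3.1 and on the update rule \eqref{Wt}; neither depends on the proportions with which individual samples appear. They therefore hold here without any modification, and in particular the telescoping chain that yields $\frac{1}{T}\sum_{t=1}^TG_2\|u_T-u_t\|\le G_2^2(T-1)/T\le G_2^2$ can be reused as is.

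The single genuinely new step is the averaging identity. Writing $z_1,\ldots,z_{m_1}$ and $\hat z_1,\ldots,\hat z_{m_2}$ for the rows of $Z_1$ and $Z_2$, one has $\|Z_1u\|^2=\sum_{i=1}^{m_1}(z_i^\top u)^2$ and $e^\top(e+Z_2u)_+=\sum_{j=1}^{m_2}(1+\hat z_j^\top u)_+$, so that
\[
f(u)=\frac{1}{2}\|u\|^2+\frac{c_1}{2m_1}\sum_{i=1}^{m_1}(z_i^\top u)^2+\frac{c_2}{m_2}\sum_{j=1}^{m_2}(1+\hat z_j^\top u)_+.
\]
Taking $T=kd$ with $d$ the least common multiple of $m_1$ and $m_2$ makes $kd/m_1$ and $kd/m_2$ integers, so the rule --- each $X_1$ sample drawn $kd/m_1$ times and each $X_2$ sample drawn $kd/m_2$ times --- is realizable and balances the two sums exactly: among the $T$ functions $f_t$, each term $(z_i^\top u)^2$ occurs $kd/m_1$ times and each term $(1+\hat z_j^\top u)_+$ occurs $kd/m_2$ times. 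Dividing by $T=kd$ restores the weights $\frac{1}{m_1}$ and $\frac{1}{m_2}$, giving $\frac{1}{T}\sum_{t=1}^T f_t(u)=f(u)$ for every $u$. With this identity the final bound $f(u_T)-\frac{1}{T}\sum_t f_t(u_t)\le G_2^2$ follows as before, and Theorem 3.1 closes the argument.

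The main obstacle is conceptual rather than computational: one has to recognize that the equal-class-size hypothesis in Corollary 3.1 served only to let a single uniform repetition count $k$ reproduce the objective's class weights, and that passing to $T=k\,d$ is precisely what restores integrality of the per-sample repetition counts once the classes are unbalanced. Once that observation is made, no new inequality is required and the remainder is a transcription of the earlier proof.
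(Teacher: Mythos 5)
Your proof is correct and takes essentially the same approach as the paper: the paper's own proof consists of nothing more than the remark that $\frac{1}{T}\sum_{t=1}^{T}f_t(u)=f(u)$ still holds for all $u$ under the prescribed repetition counts, after which it declares the argument identical to that of Corollary 3.1. Your counting verification of this averaging identity (each $(z_i^\top u)^2$ appearing $kd/m_1$ times, each $(1+\hat z_j^\top u)_+$ appearing $kd/m_2$ times, divided by $T=kd$) is precisely the detail the paper leaves implicit, and the rest of your argument transcribes the earlier proof just as the paper intends.
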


Note that for all $u\in R^n$, $\frac{1}{T}\sum\limits_{t=1}^Tf_t(u)=f(u)$. The proof of the above corollary is the same as Corollary 3.1.

The above corollaries provide the approximations of $u^*$ by $u_T$. If the sampling rule is not as stated in these corollaries, these upper bounds no longer holds. However, Kakade and Tewari \cite{SGDconvergeP} have shown a way to obtain a similar bounds with high probability.


\section{Experiments}
In the experiments, we compared our SGTSVM
with SVM \cite{SVM1}, PEGASOS \cite{PEGASOS}, and TWSVM \cite{TWSVM,TBSVM} on several artificial and large scale problems. All of the methods were implemented on a PC with an Intel Core Duo processor (3.4 GHz) with 4 GB RAM.

\begin{figure*}
\begin{center}
\subfigure[TWSVM]
{\includegraphics[width=0.35\textheight]{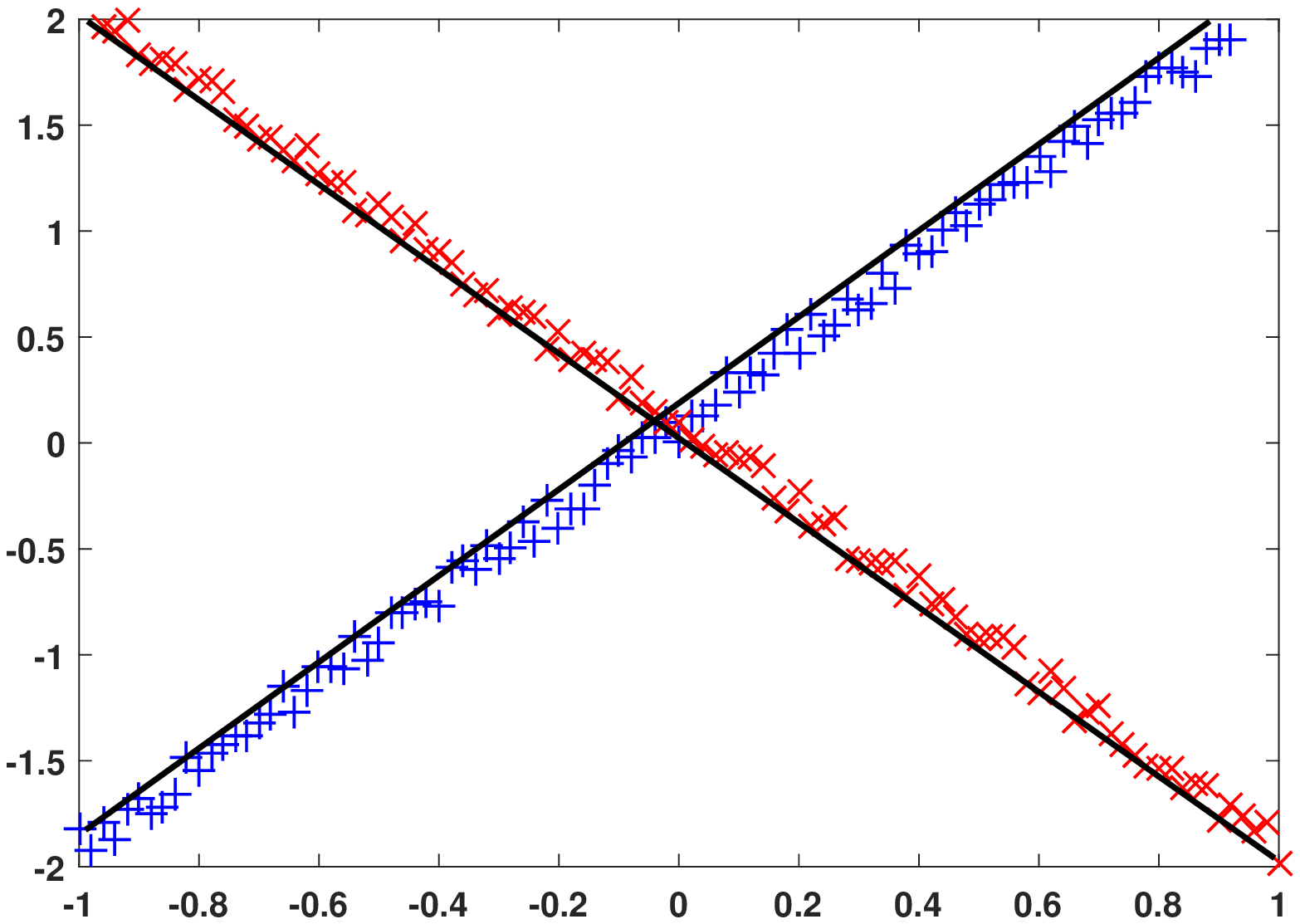}}
\subfigure[SGTSVM]
{\includegraphics[width=0.35\textheight]{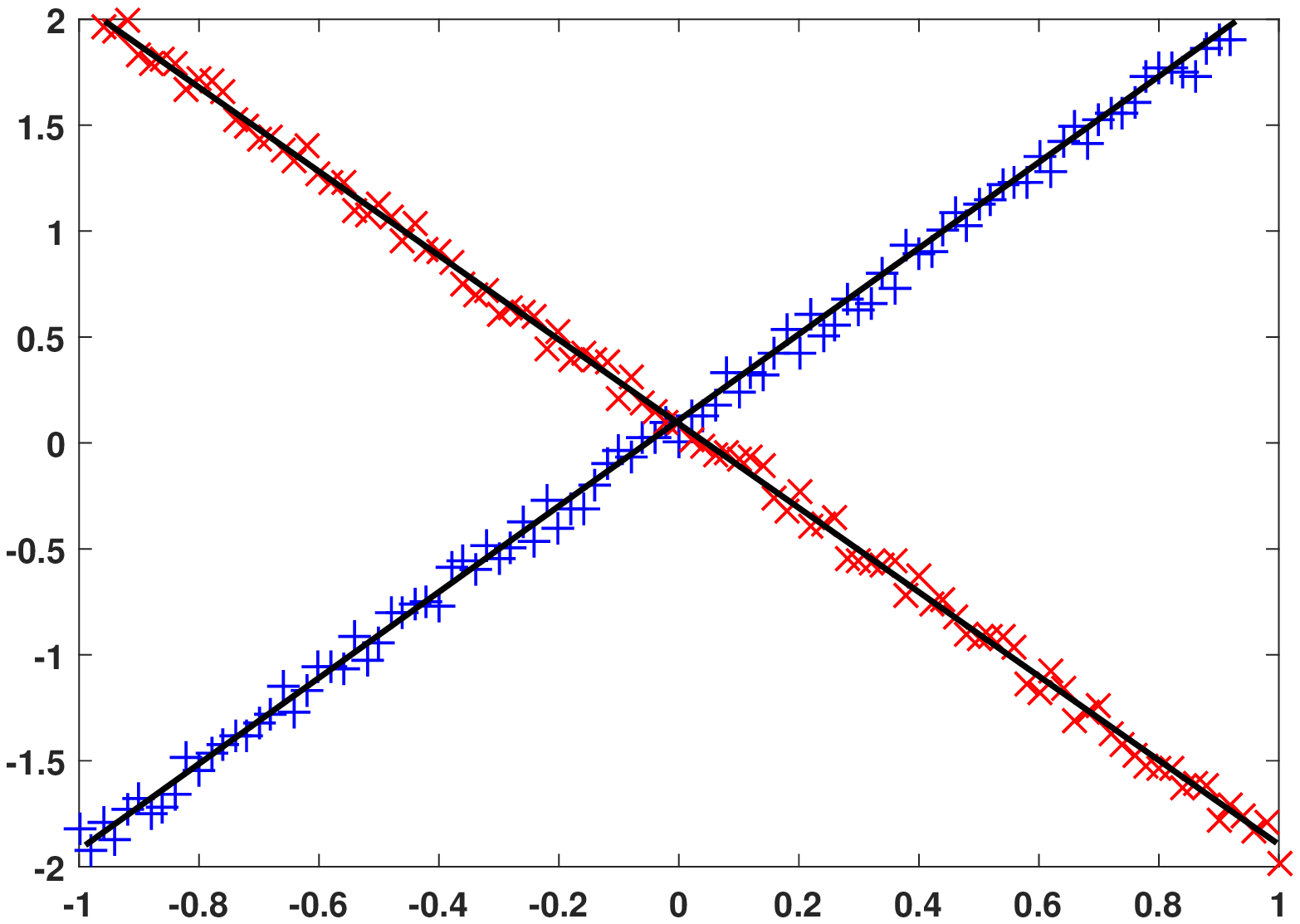}}
\caption{Results of TWSVM and SGTSVM on the ``cross planes'', where the black solid lines are $w_1^\top x+b_1=0$ and $w_2^\top x+b_2=0$, respectively.}\label{CrossPlane}
\end{center}
\end{figure*}

\subsection{Artificial datasets}
On the artificial datasets, PEGASOS, TWSVM, and our SGTSVM were implemented by Matlab \cite{MATLAB}, and the corresponding SGTSVM Matlab codes were uploaded upon
\url{http://www.optimal-group.org/Resource/SGTSVM.html}.

\begin{figure*}[!htb]
\begin{center}
\subfigure[Cross planes]
{\includegraphics[width=0.35\textheight]{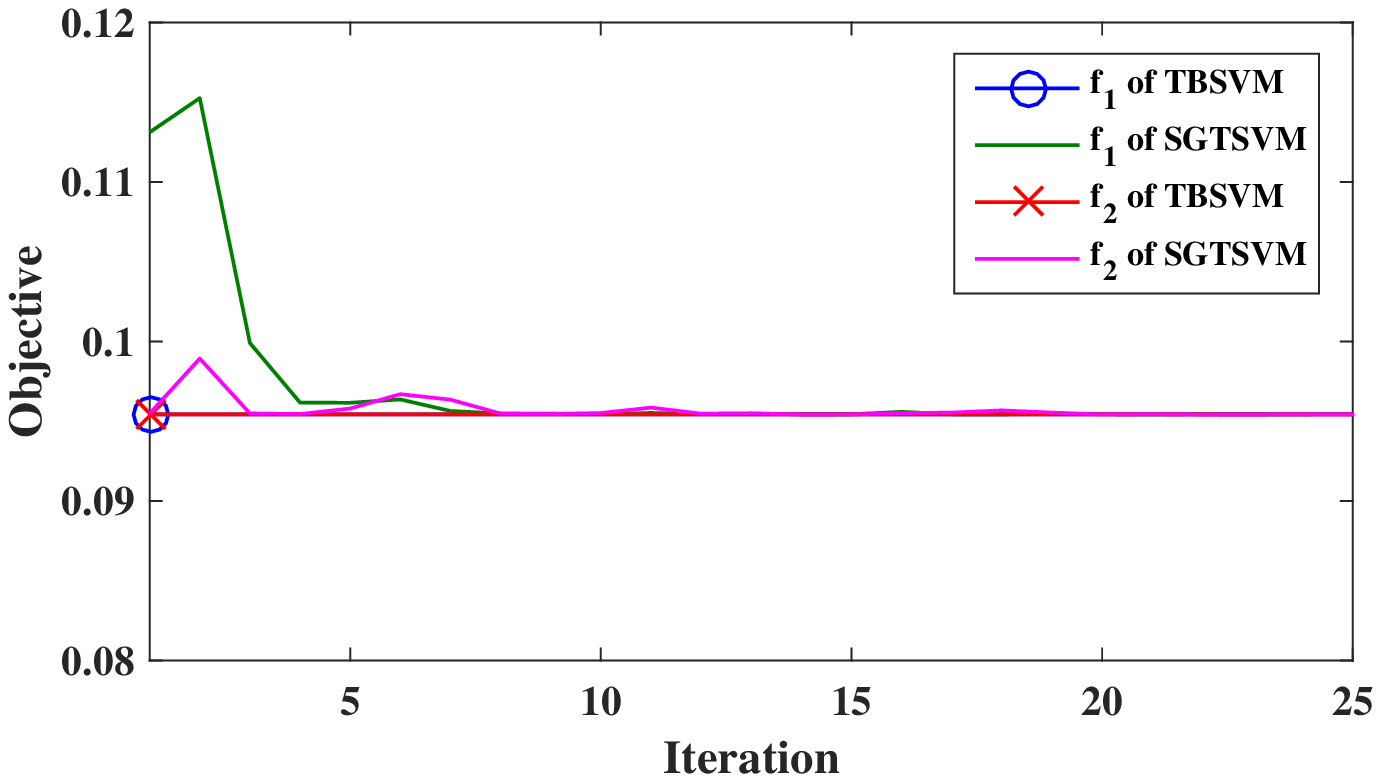}}
\subfigure[Australia]
{\includegraphics[width=0.35\textheight]{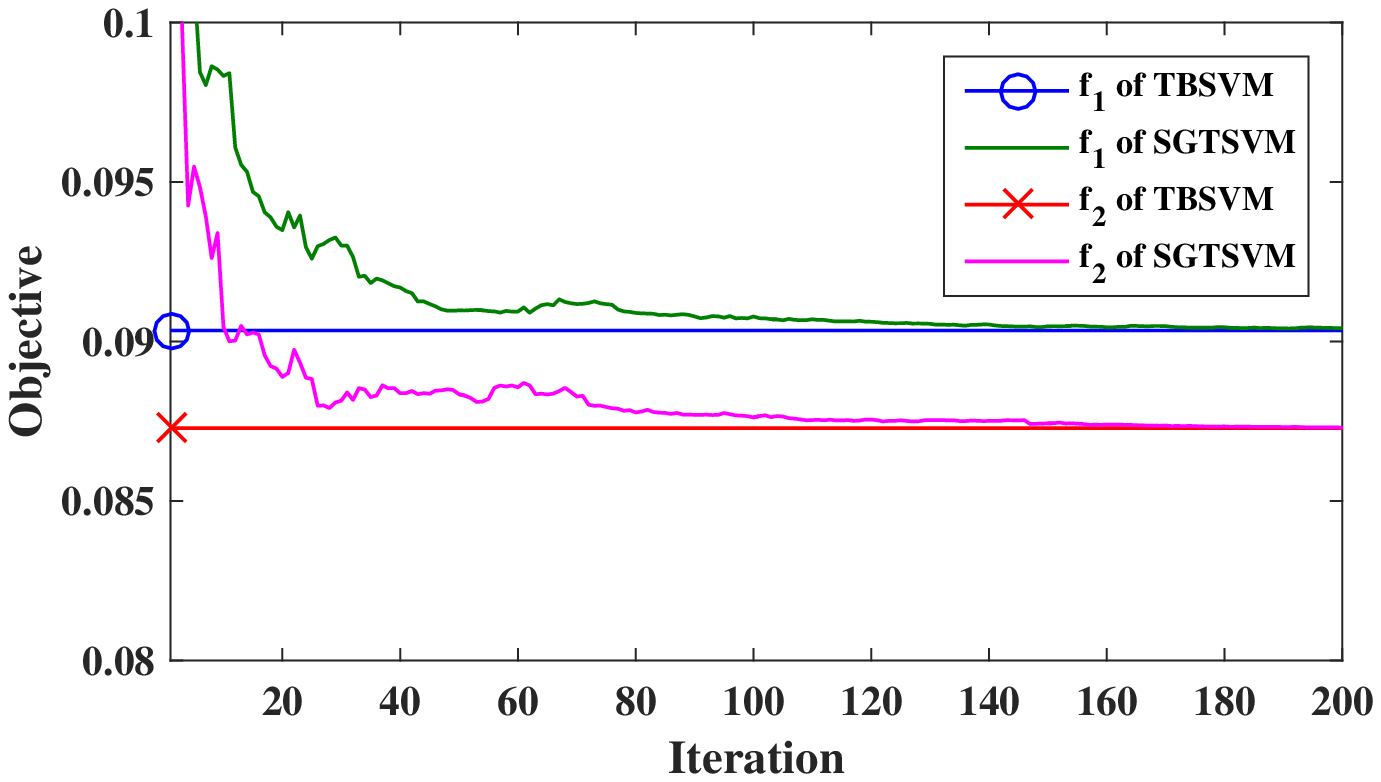}}
\subfigure[Creadit]
{\includegraphics[width=0.35\textheight]{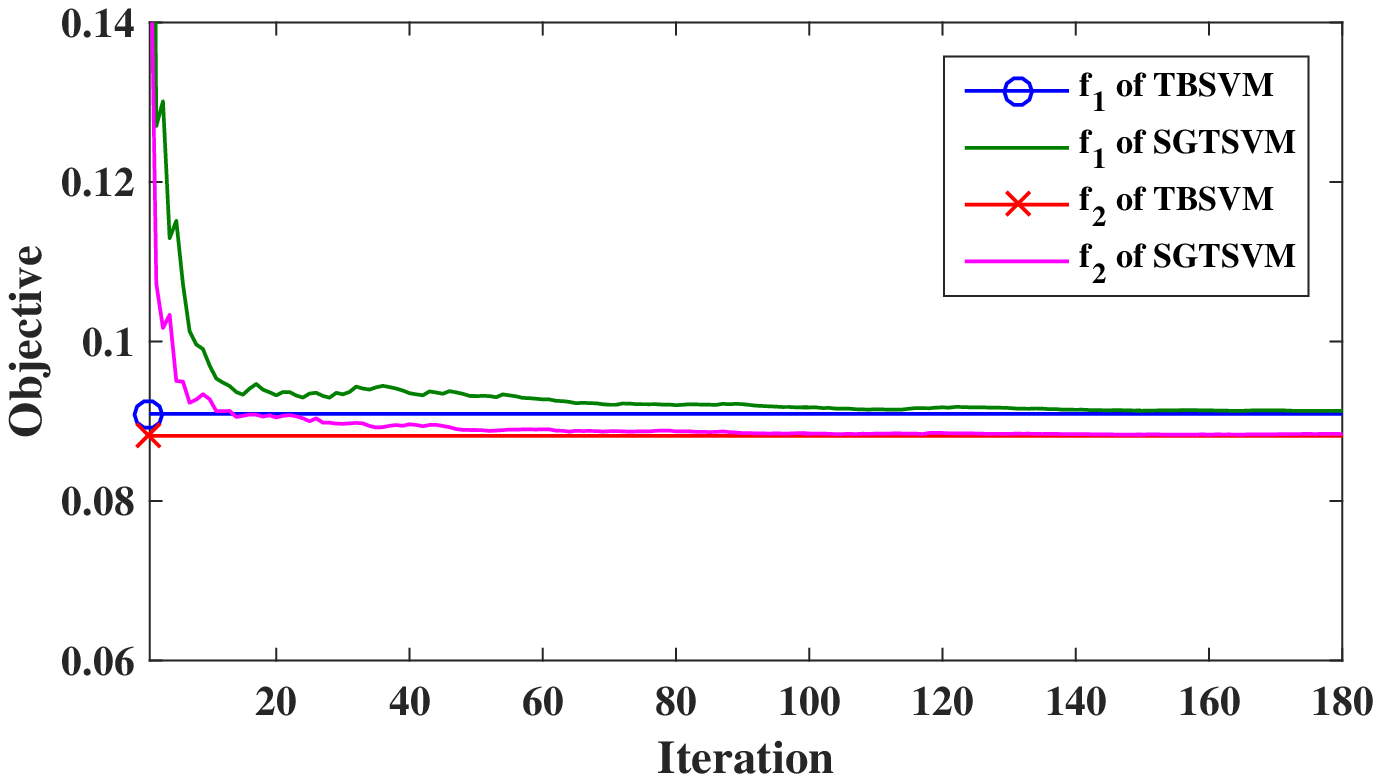}}
\subfigure[Hypothyroid]
{\includegraphics[width=0.35\textheight]{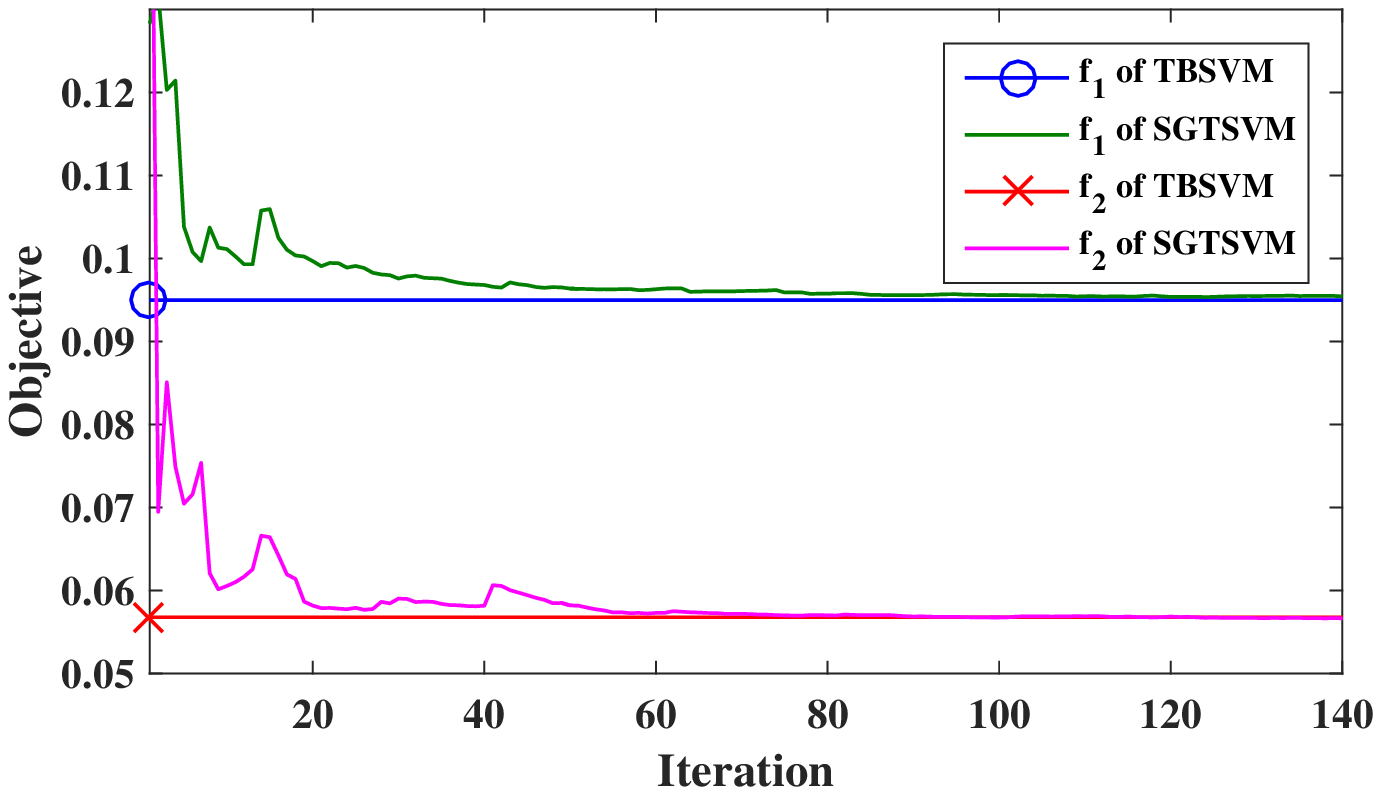}}
\caption{Results of linear TWSVM and SGTSVM on the four datasets, where the vertical axis denotes the objectives of $f_1$ and $f_2$.}\label{Obj}
\end{center}
\end{figure*}

\begin{figure*}[!htb]
\begin{center}
\subfigure[Cross planes]
{\includegraphics[width=0.35\textheight]{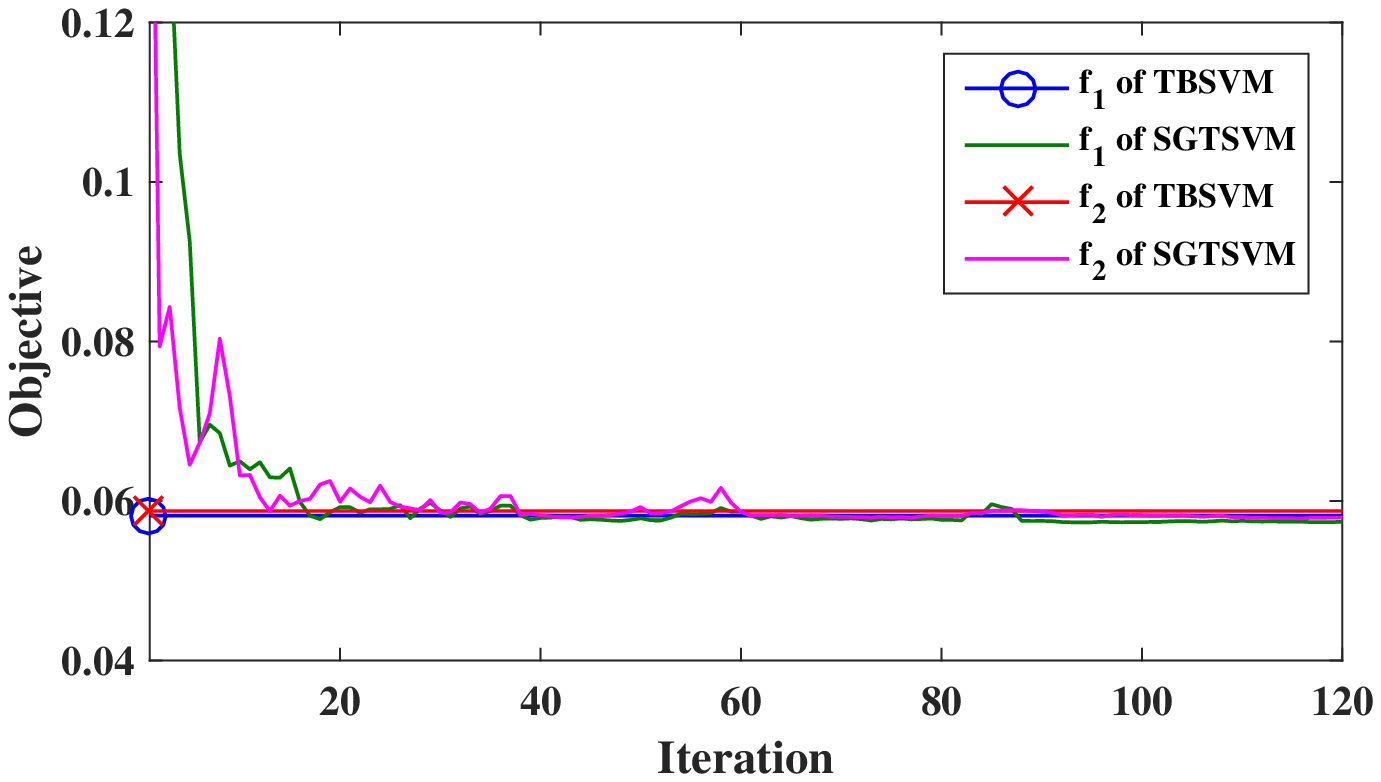}}
\subfigure[Australia]
{\includegraphics[width=0.35\textheight]{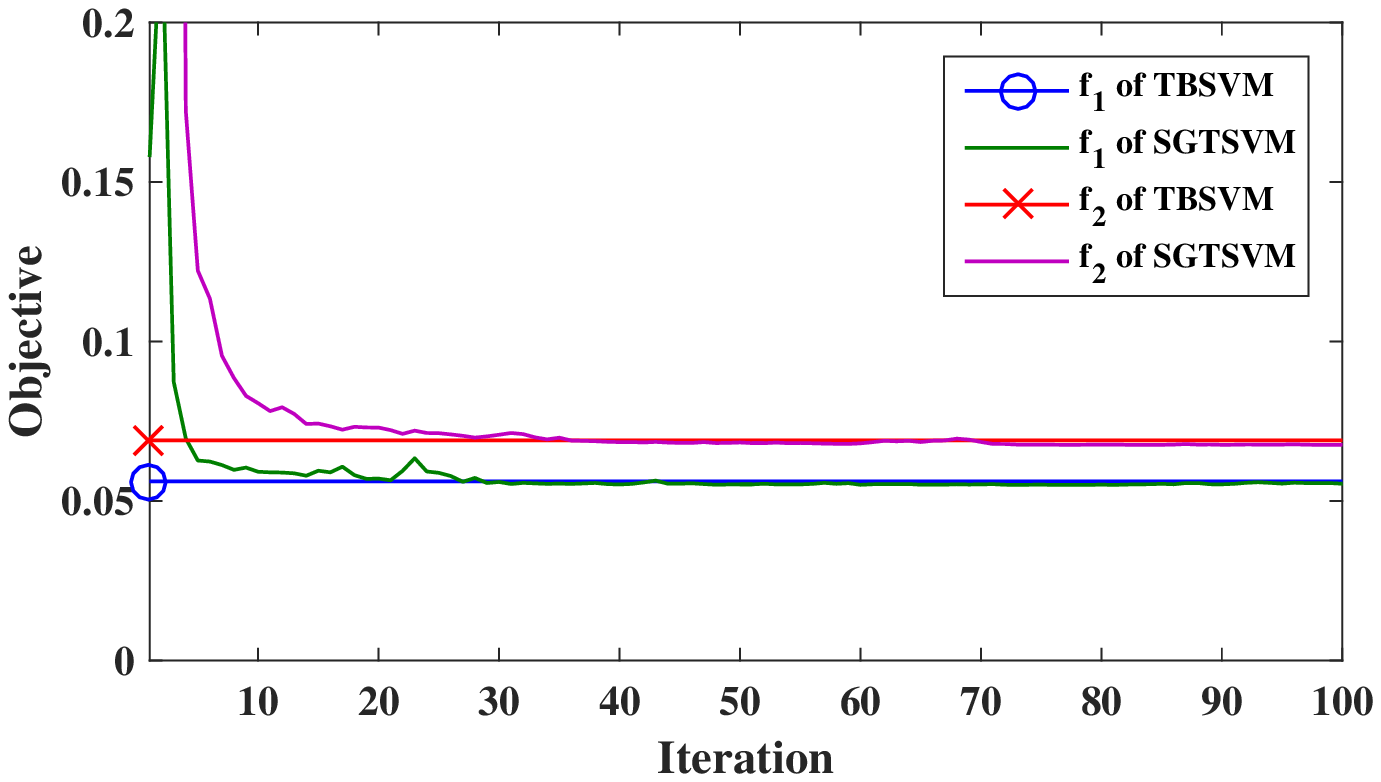}}
\subfigure[Creadit]
{\includegraphics[width=0.35\textheight]{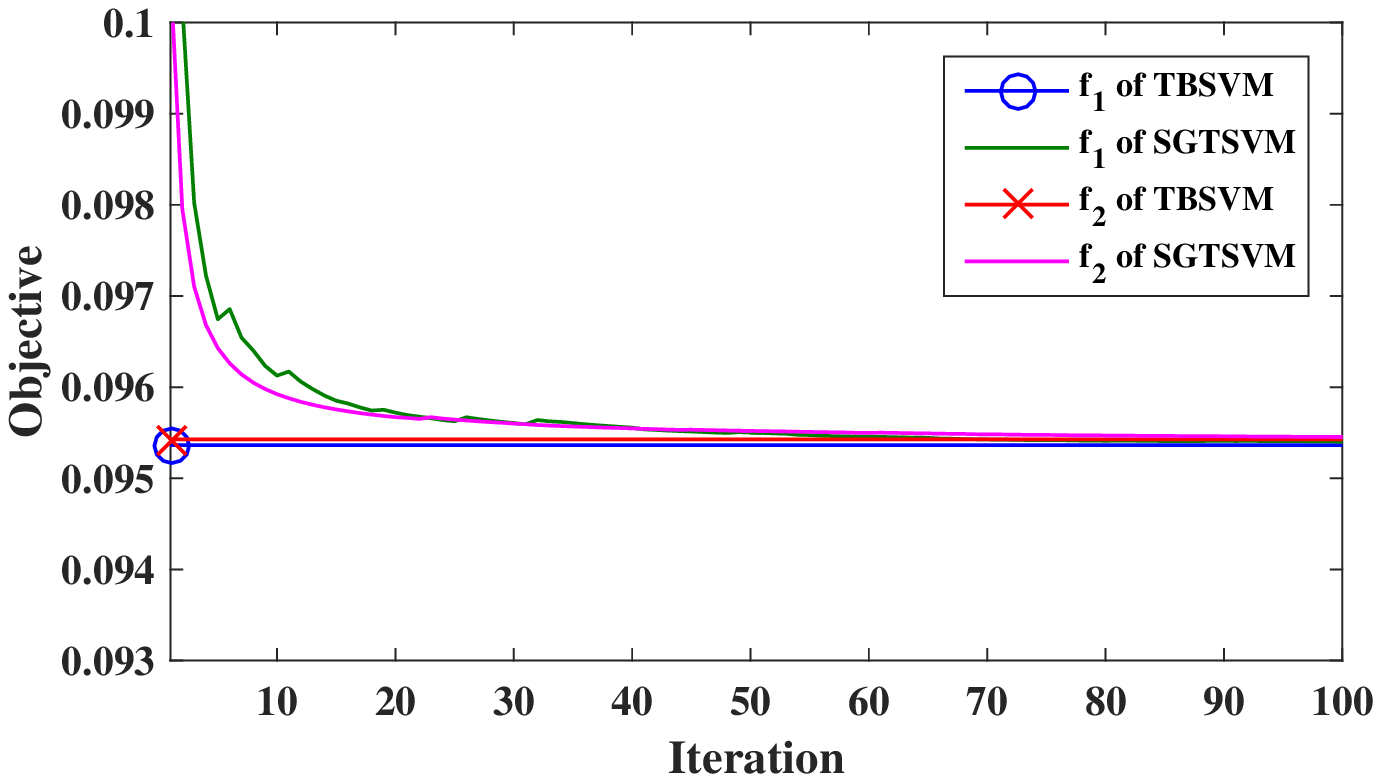}}
\subfigure[Hypothyroid]
{\includegraphics[width=0.35\textheight]{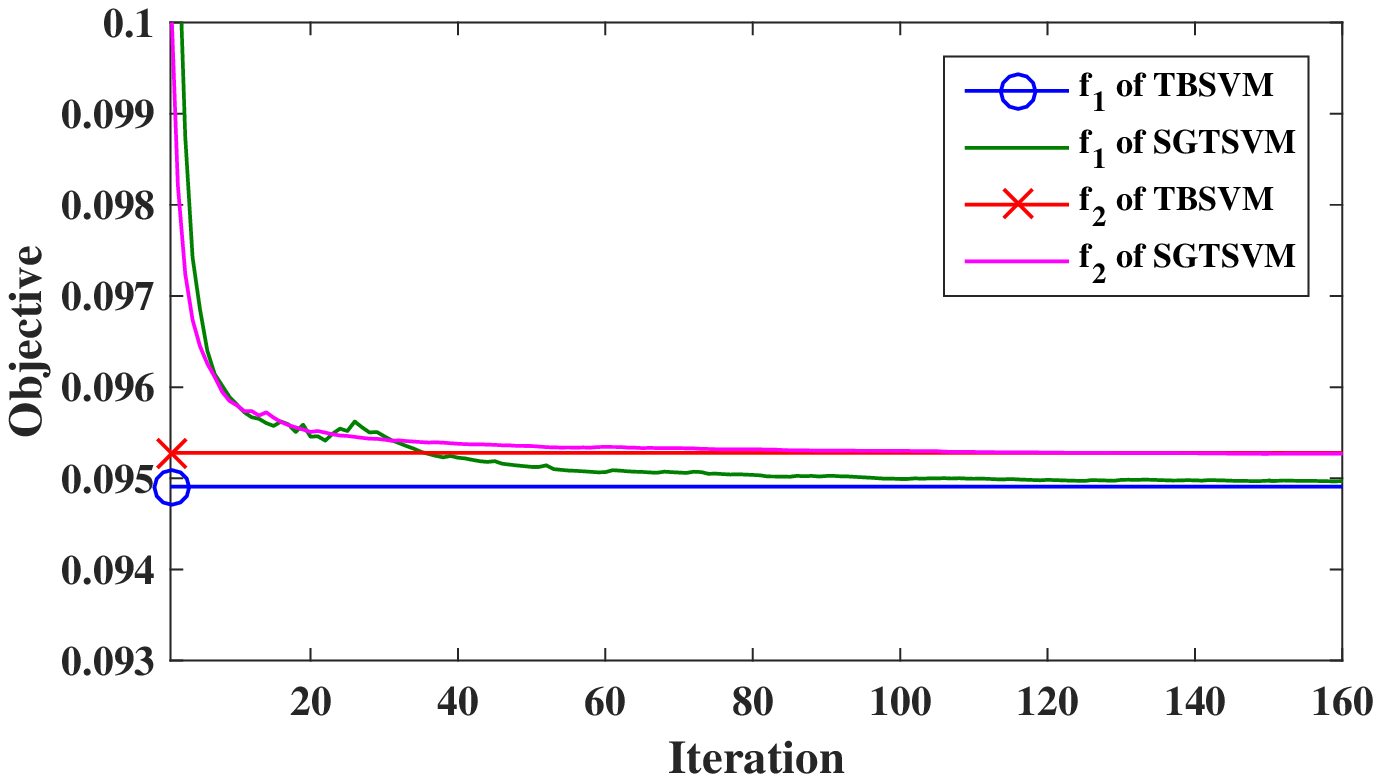}}
\caption{Results of nonlinear TWSVM and SGTSVM on the four datasets, where the vertical is the same as the one in Fig. \ref{Obj}.}\label{nObj}
\end{center}
\end{figure*}

First of all, we consider the similarity between TWSVM and SGTSVM. These two methods were implemented on the ``cross planes'' dataset, where TWSVM was superior on this dataset \cite{TWSVM}. Figure \ref{CrossPlane} shows the proximal lines on the dataset. It is obvious that the two proximal lines by SGTSVM is similar as the ones by TWSVM, so TWSVM and SGTSVM can precisely capture the data distribution, and thus both of them obtain the well classifier.
To measure the similarity quantitatively, the optimums $f_1$ of \eqref{10} and $f_2$ of \eqref{11} in TWSVM were calculated compared with the ones of each iteration in SGTSVM on the ``cross planes'' and some UCI datasets \cite{UCI} (e.g., dataset Australia which includes $690$ samples with $14$ features, dataset Creadit which includes $690$ samples with $15$ features, and dataset Hypothyroid which includes $3,163$ samples with $25$ features). Linear TWSVM, SGTSVM, and their nonlinear versions were implemented, where the Gaussian kernel $K(x,y)=\exp\{-\mu||x-y||^2\}$ was used for nonlinear versions.
The parameters $c_1$, $c_2$, $c_3$, $c_4$, and $\mu$ are fixed to $0.1$. Figure \ref{Obj} shows the results from the two linear classifiers, and Figure \ref{nObj} corresponds to the nonlinear case. In Figures \ref{Obj} and \ref{nObj}, the horizontal axis denotes the iteration of SGTSVM and the vertical axis denotes the objectives $f_1$ and $f_2$ of TWSVM and SGTSVM. Due to the objectives of TWSVM are constant, they are denoted by two horizontal lines, while the objectives of SGTSVM for each iteration are denoted by two broken lines in these figures.
For different datasets, it can be seen that our SGTSVM converges to TWSVM after different iterations. For instance, linear SGTSVM converges to TWSVM after $20$ iterations in Figure \ref{Obj} (a), whereas the same thing appears in Figure \ref{Obj} (b) after $180$ iterations. Generally, SGTSVM converges to TWSVM after $150$ iterations on these datasets either for linear or nonlinear case. Furthermore, the 10-fold cross validation \cite{SVM5} was
used on these datasets. We ran TWSVM and SGTSVM $10$ times, and reported the mean accuracy and standard deviation on Table \ref{Fold10}. The differences of the mean accuracies are no more than $2\%$, which implies the classifiers obtained by TWSVM and SGTSVM do not have significant difference.
\begin{table}
\caption{Mean accuracy (\%) with standard deviation of TWSVM and SGTSVM by 10-fold cross validation.} \centering
\begin{tabular}{lllll}
\hline\noalign{\smallskip}
Data&TWSVM$^\dag$&SGTSVM$^\dag$&TWSVM$^\sharp$&SGTSVM$^\sharp$ \\
\hline
Cross~Planes&96.05$\pm$0.70 &97.71$\pm$0.41&99.01$\pm$2.24      &98.51$\pm$2.15   \\
\hline
Australia&86.87$\pm$0.38&87.34$\pm$0.13&87.10$\pm$0.43&85.21$\pm$0.16\\
\hline
Creadit&85.78$\pm$0.32&85.72$\pm$0.23&86.71$\pm$0.33 &85.21$\pm$0.45\\
\hline
Hypothyroid&98.21$\pm$0.09&97.28$\pm$0.01&98.08$\pm$0.09&98.07$\pm$0.03\\
\hline \noalign{\smallskip}
\end{tabular} \label{Fold10}
$^\dag linear ~case$;$^\sharp nonlinear ~case$.
\end{table}

\begin{figure*}
\begin{center}
\subfigure[PEGASOS]
{\includegraphics[width=0.35\textheight]{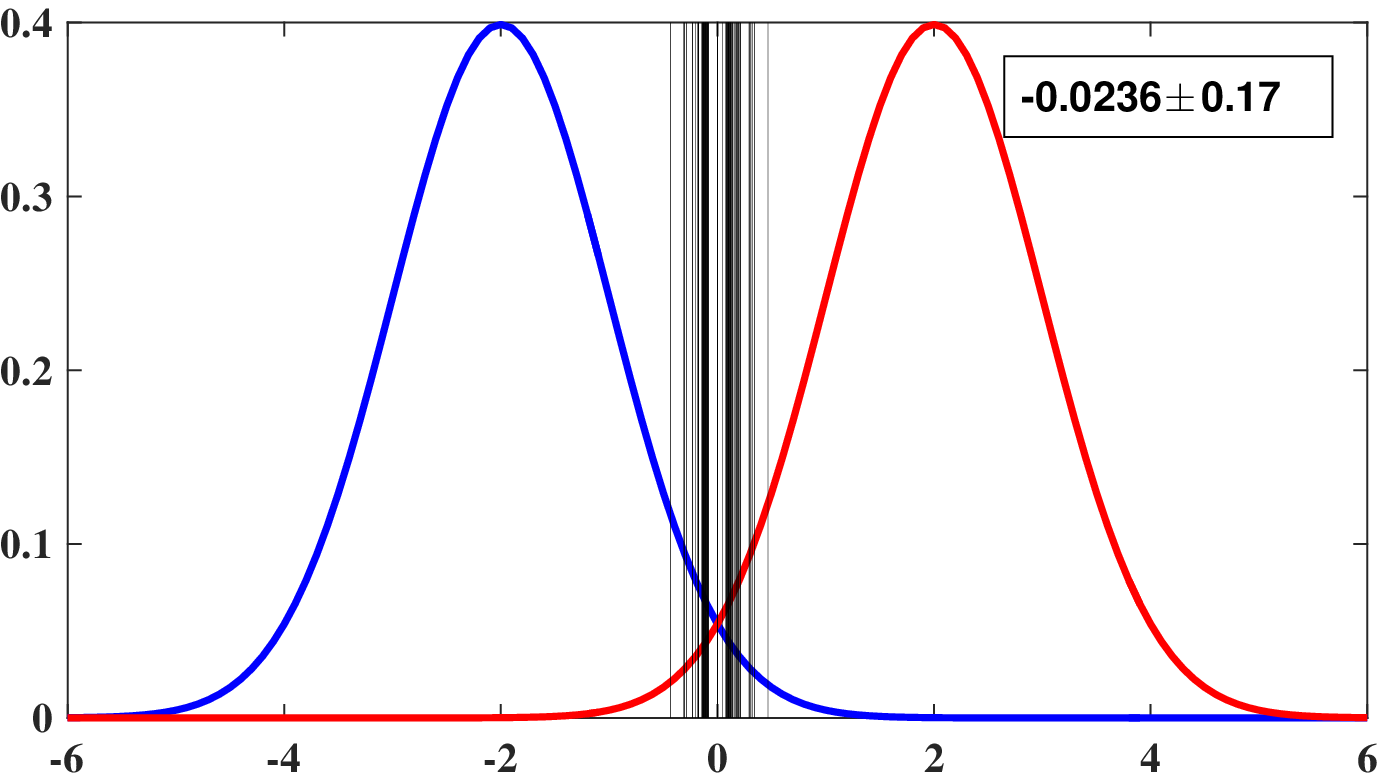}}
\subfigure[SGTSVM]
{\includegraphics[width=0.35\textheight]{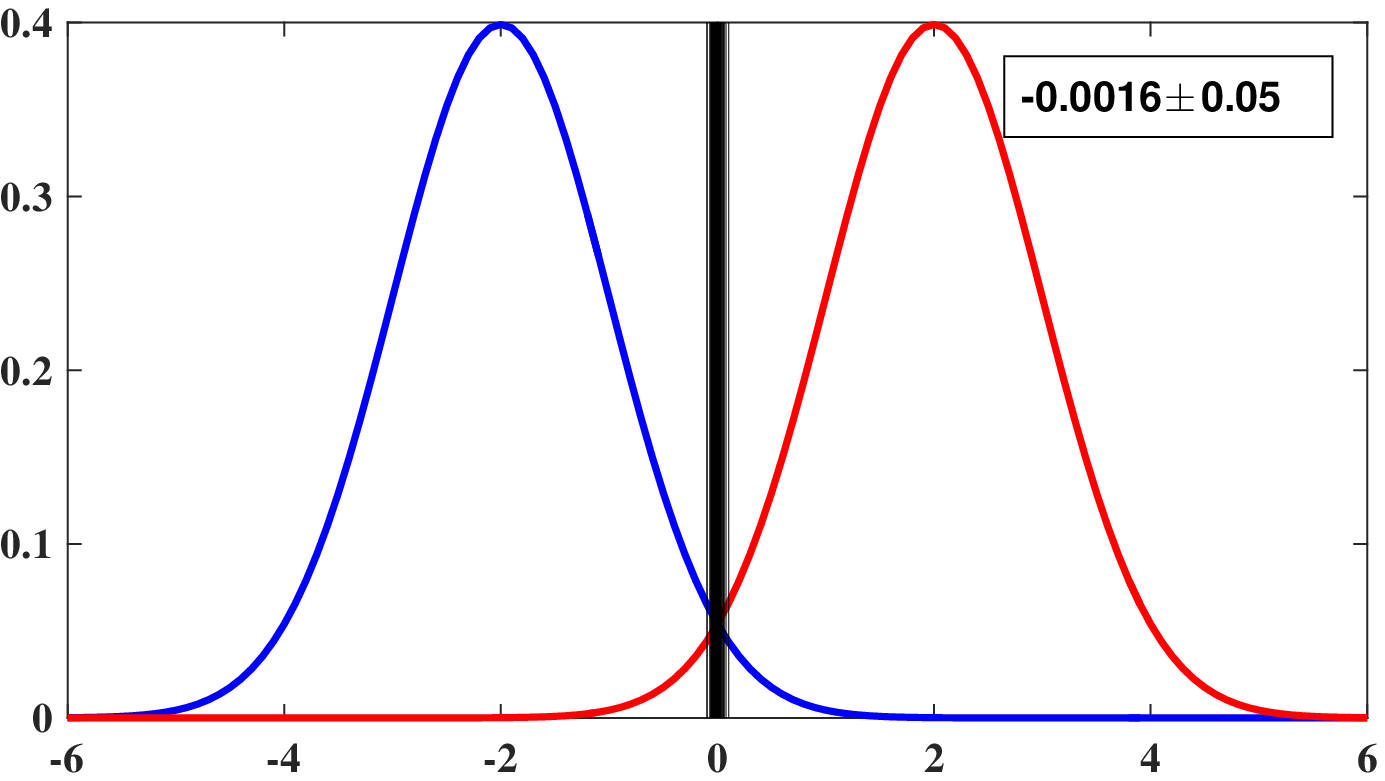}}
\caption{Results of PEGASOS and SGTSVM on 100 artificial datasets, where the 100 upright black solid lines are final classifiers.}\label{Art1}
\end{center}
\end{figure*}

Secondly, we test the stability of SGTSVM compared with PEGASOS. $100$ datasets were generated randomly, and each dataset contain $10,000$ samples in $R$, where $5,000$ negative samples are from normal distribution $N(-2,1)$ and $5,000$ positive ones are from $N(2,1)$. The best classification point is at zero. We implemented PEGASOS and SGTSVM without any restrictions on the $100$ datasets and obtained $100$ classifiers shown in Figure \ref{Art1}, where the upper right digit is the mean of these lines together with their standard deviation (the parameters $c$ in PEGASOS, $c_1$, $c_2$, $c_3$, and $c_4$ in SGTSVM were fixed to $0.1$). It is clear that our SGTSVM obtains much more compact classification lines than PEGASOS. The mean line of SGTSVM is at $-0.0016$ which is closer to zero and its standard deviation is smaller than PEGASOS. In order to investigate the effect of sampling, PEGASOS and SGTSVM were implemented on the above $100$ datasets with the restricted sampling (i.e., some possible support vectors from negative samples in SVM and the samples close to these support vectors are invisible for sampling). Figure \ref{Art1_DelSV} shows the results of PEGASOS and SGTSVM, where the dash line denotes that the samples in this scope are invisible for sampling. From Figure \ref{Art1_DelSV}, it can be seen that the classification lines by PEGASOS fall into two regions, while SGTSVM obtains a compact region. Thus, it means that the possible support vectors significantly influence PEGASOS, while SGTSVM relatively relies on the data distribution. From Figures \ref{Art1} and \ref{Art1_DelSV}, PEGASOS always acquires a mean classification line further from zero with a larger standard deviation than SGTSVM. Therefore, SGTSVM is more stable than PEGASOS on these datasets with or without restricted sampling. To further show the classifiers' stability, we recorded the classification accuracies ($\%$) of PEGASOS and SGTSVM on one of the $100$ datasets. PEGASOS and SGTSVM were implemented $100$ times on this dataset, where the parameters were set as before and two methods were iterated $200$ times. Every accuracies of these methods are reported in Figure \ref{Art1_AC}. From Figure \ref{Art1_AC}, the accuracies of SGTSVM belong to $[99.0,99.5]$ while PEGASOS is $[96.5,99.5]$, which indicates SGTSVM is more stable than PEGASOS from the aspect of classification result. Although PEGASOS obtains the highest accuracy in this test, SGTSVM obtains higher accuracies than PEGASOS in most cases.

\begin{figure*}
\begin{center}
\subfigure[PEGASOS]
{\includegraphics[width=0.35\textheight]{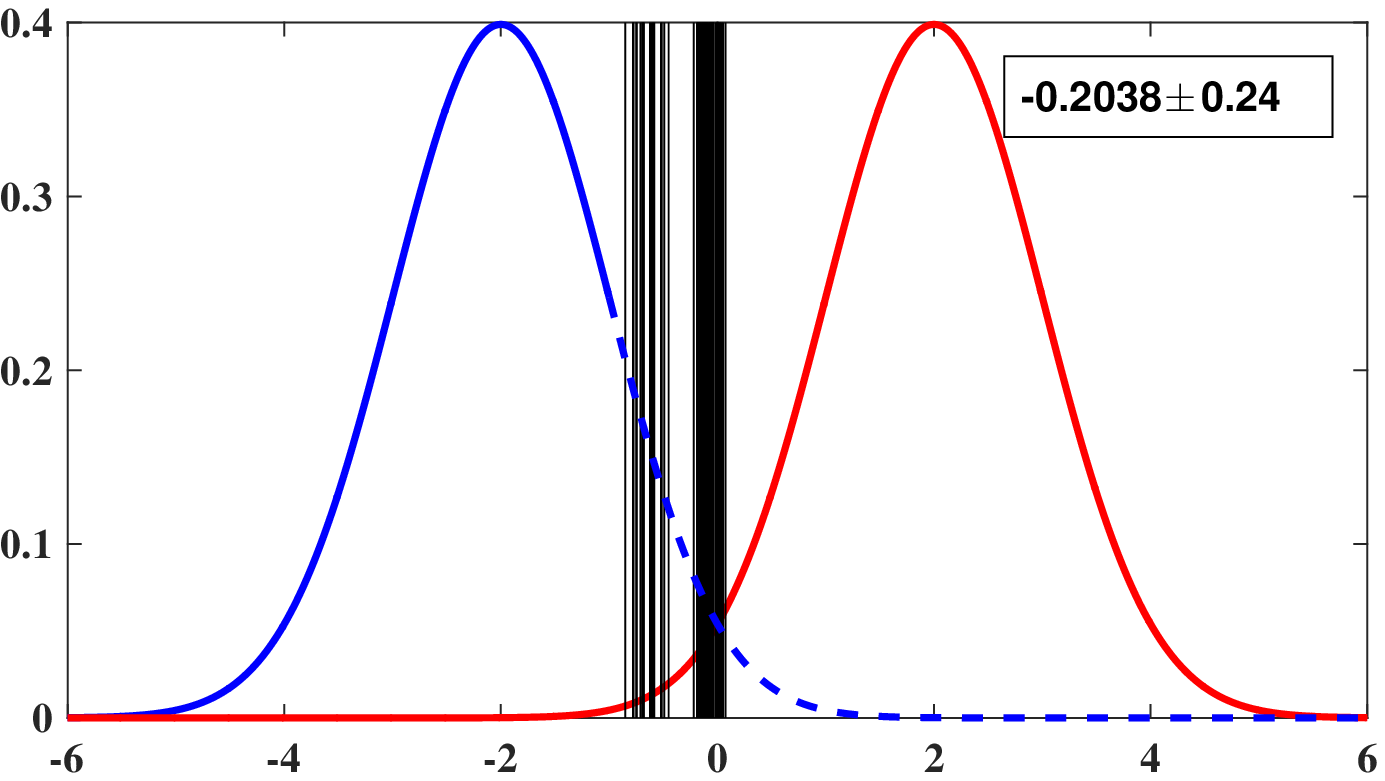}}
\subfigure[SGTSVM]
{\includegraphics[width=0.35\textheight]{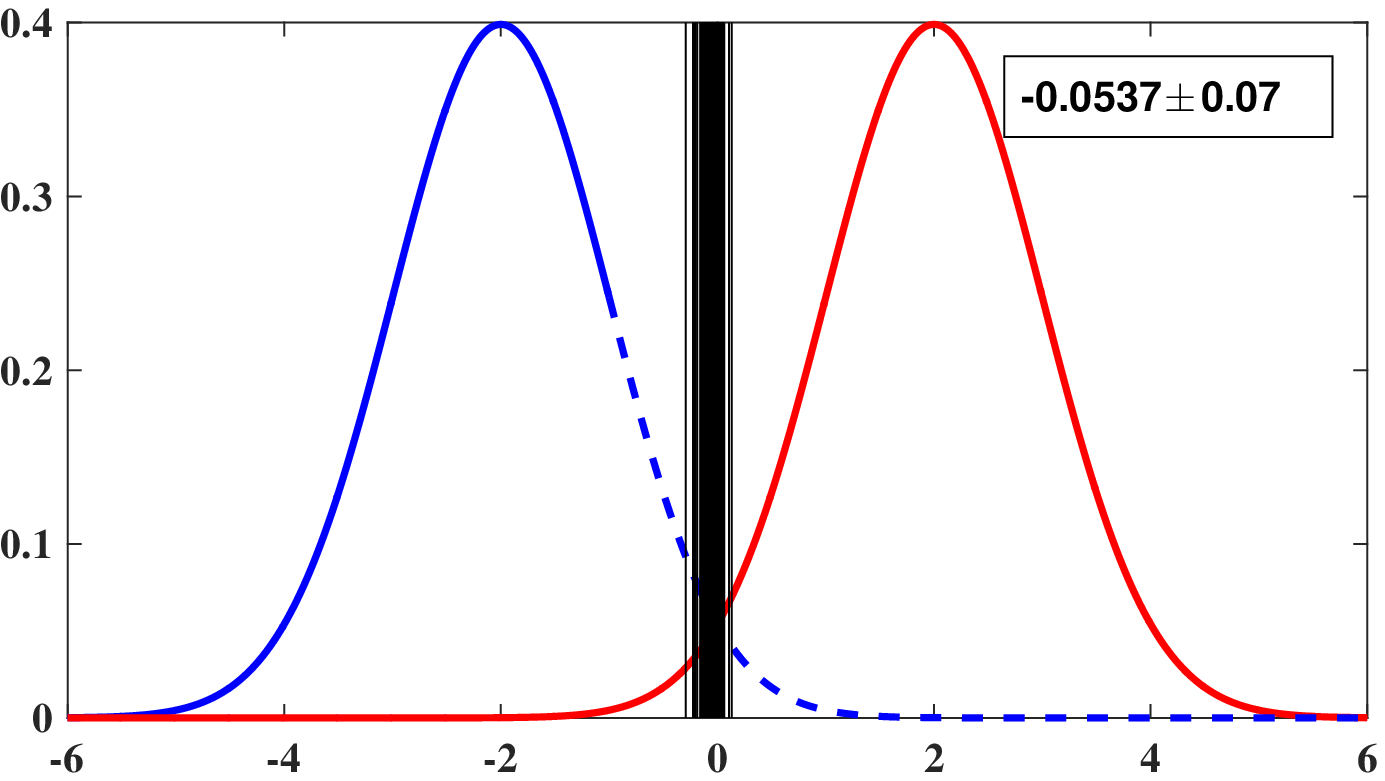}}
\caption{Results of PEGASOS and SGTSVM on 100 artificial datasets, where the 100 upright black solid lines are final classifiers, and the samples on the dash line is invisible for sampling.}\label{Art1_DelSV}
\end{center}
\end{figure*}

\begin{figure*}
\begin{center}
{\includegraphics[width=0.5\textheight]{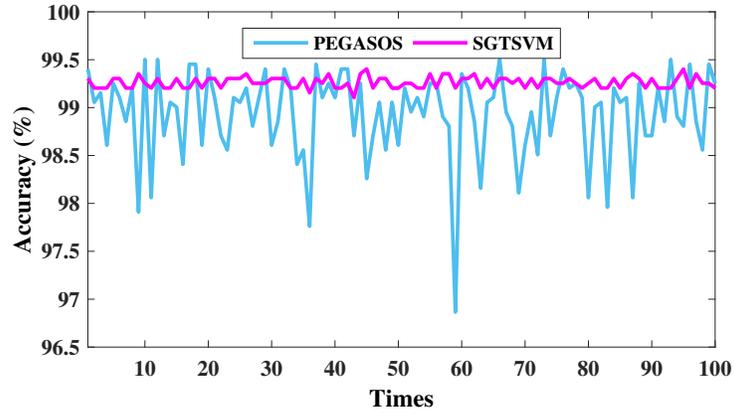}}
\caption{Accuracies of PEGASOS and SGTSVM on a normal distribution dataset, where each method is implemented 100 times.}\label{Art1_AC}
\end{center}
\end{figure*}

\begin{figure*}
\begin{center}
{\includegraphics[width=0.6\textheight]{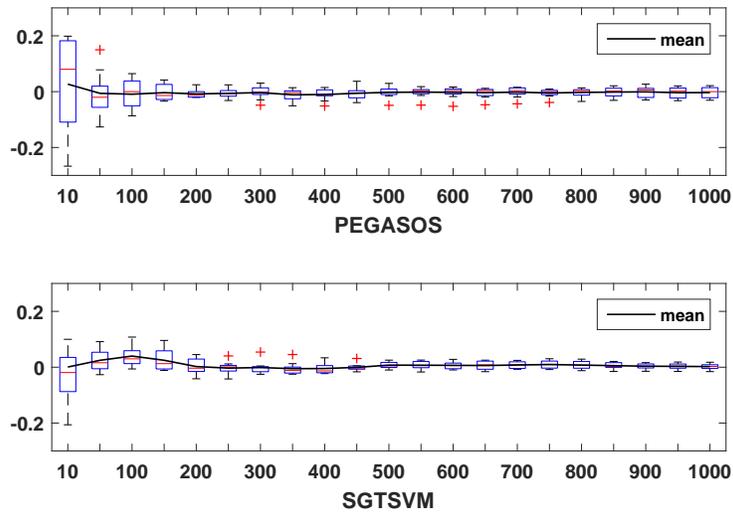}}
\caption{Results of PEGASOS and SGTSVM on a normal distribution dataset, where each method is implemented 10 times. The horizontal axis is the iteration and the vertical one is the classification location.}\label{Art2_box}
\end{center}
\end{figure*}
\begin{figure*}
\begin{center}
{\includegraphics[width=0.35\textheight]{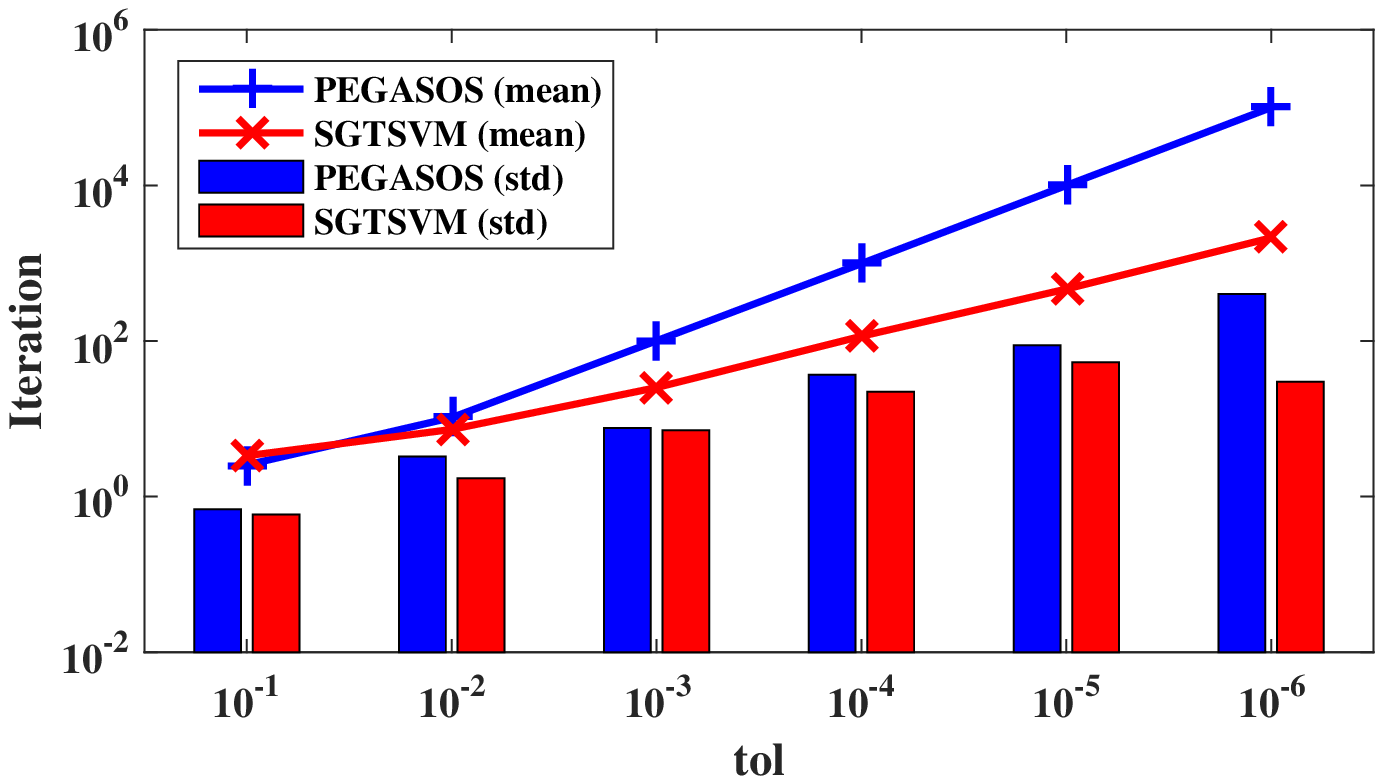}}
{\includegraphics[width=0.35\textheight]{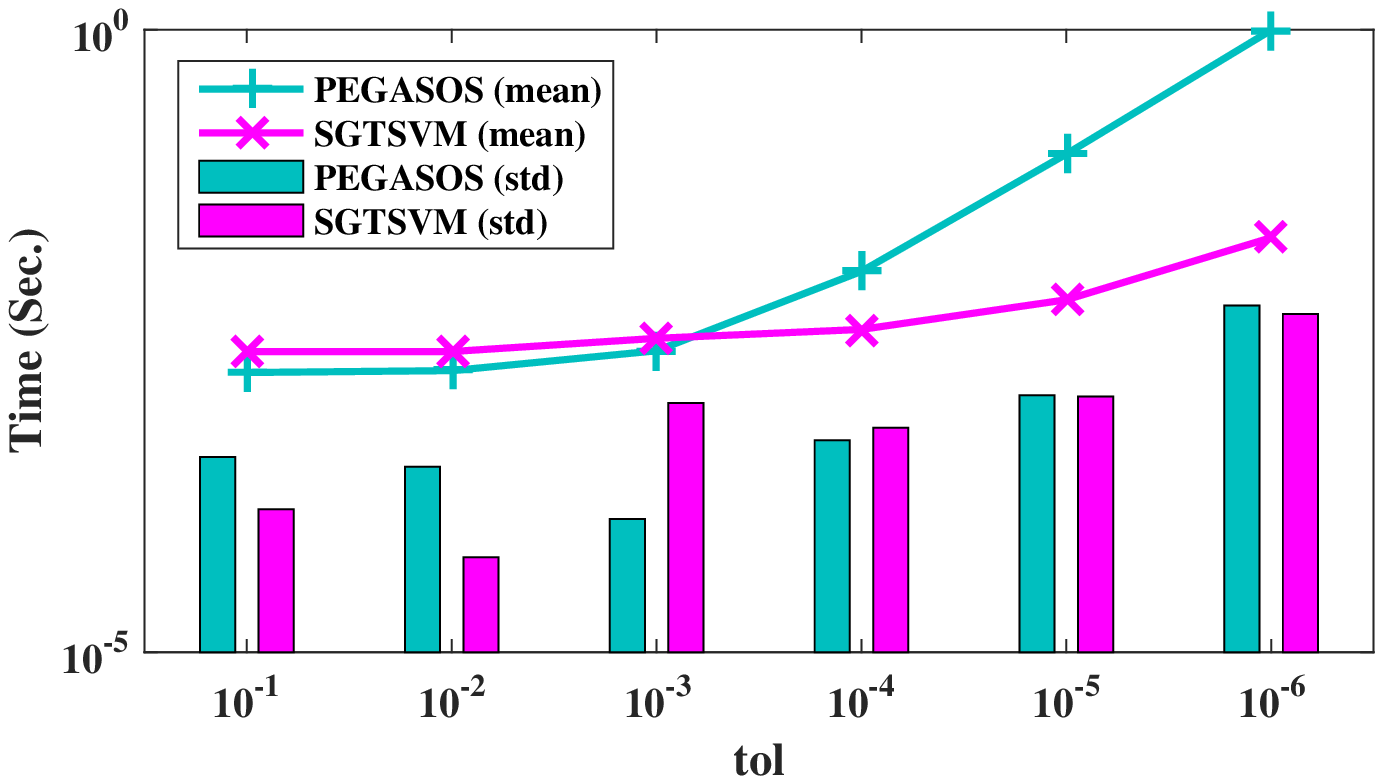}}
\caption{Iteration and time of PEGASOS and SGTSVM on a normal distribution dataset, where each method is implemented 100 times.}\label{Art2}
\end{center}
\end{figure*}

Finally, we test the convergence of PEGASOS and SGTSVM. A dataset contains $20,000$ samples in $R$ was generated randomly, where $10,000$ negative samples are from normal distribution $N(-2,1)$ and $10,000$ positive ones are from $N(2,1)$. PEGASOS and SGTSVM were implemented $10$ times and each method was iterated $1,000$ times. The current classification locations for different iterations were reported in Figure \ref{Art2_box}, where the horizontal axis is the iteration and the vertical one is the classification location.
From Figure \ref{Art2_box}, it can be seen that: (i) the initial selected samples do not very affect both PEGASOS and SGTSVM after iterating $150$ times; (ii) after iterating $100$ times, the classification locations of two methods are centralized to zero and the error is less than $0.1$; (iii) it is important that PEGASOS gets higher error after iterating $800$ times than SGTSVM, indicates PEGASOS converges slower than SGTSVM. To more precisely discuss the convergence, PEGASOS and SGTSVM were implemented $100$ times and each method was terminated by the solution error parameter $tol$ (more details about $tol$ can be found in Algorithm 1). $tol$ is selected from $\{10^i|i=-1,-2,\ldots,-6\}$, and the corresponding iteration and spent time are reported in Figure \ref{Art2}. It is clear from Figure \ref{Art2} that our SGTSVM converges faster than PEGASOS when $tol\leq10^{-3}$. Moreover, if one needs smaller solution error such as $tol=10^{-4}$ or $tol=10^{-5}$, the iterations of PEGASOS would be about $10$ times more than SGTSVM, and it would be $100$ times when $tol=10^{-6}$ (thus the learning time between PEGASOS and SGTSVM is more than a hundredfold). Therefore, SGTSVM converges much faster than PEGASOS.

\begin{table}
\caption{The details of the large scale datasets.} \centering
\begin{tabular}{lllll}
\hline\noalign{\smallskip}
Data&Name & No. of samples &  Dimension & Ratio \\
\hline
(a)&  Skin   &~~~~245,057   & ~3   & 0.262   \\
\hline
(b)& Gashome&~~~~928,990&10&0.578\\
\hline
(c)&Susy&~~5,000,000&18 &0.844\\
\hline
(d)&Kddcup&~~4,898,432&41&0.248\\
\hline
(e)&Gas& ~~8,386,764&16&0.077\\
\hline
(f)&Hepmass&10,500,000&28&1.000\\
\hline \noalign{\smallskip}
\end{tabular} \label{Dataset}\\
\end{table}

\subsection{Large scale datasets}
To test the feasibility of these methods on large scale datasets, we ran SVM, PEGASOS, and SGTSVM on six large scale datasets \cite{UCI}. Table \ref{Dataset} shows the details of the large scale datasets, where Ratio in Table \ref{Dataset} is the sample number of positive class than negative one. Each dataset is split into two subsets where one (including $90\%$ samples) is used for training and the other (including $10\%$ samples) is used for testing. SVM is implemented by Liblinear \cite{LIBLINEAR}, while PEGASOS and SGTSVM are implemented by the softwares written in C language. The corresponding softwares can be downloaded from \url{http://www.optimal-group.org/Resource/SGTSVM.html}. For nonlinear SGTSVM, the reduced kernel \cite{RSVM} is used and the kernel size is fixed to $100$.

\begin{figure*}
\begin{center}
\subfigure[PEGASOS]
{\includegraphics[width=0.35\textheight]{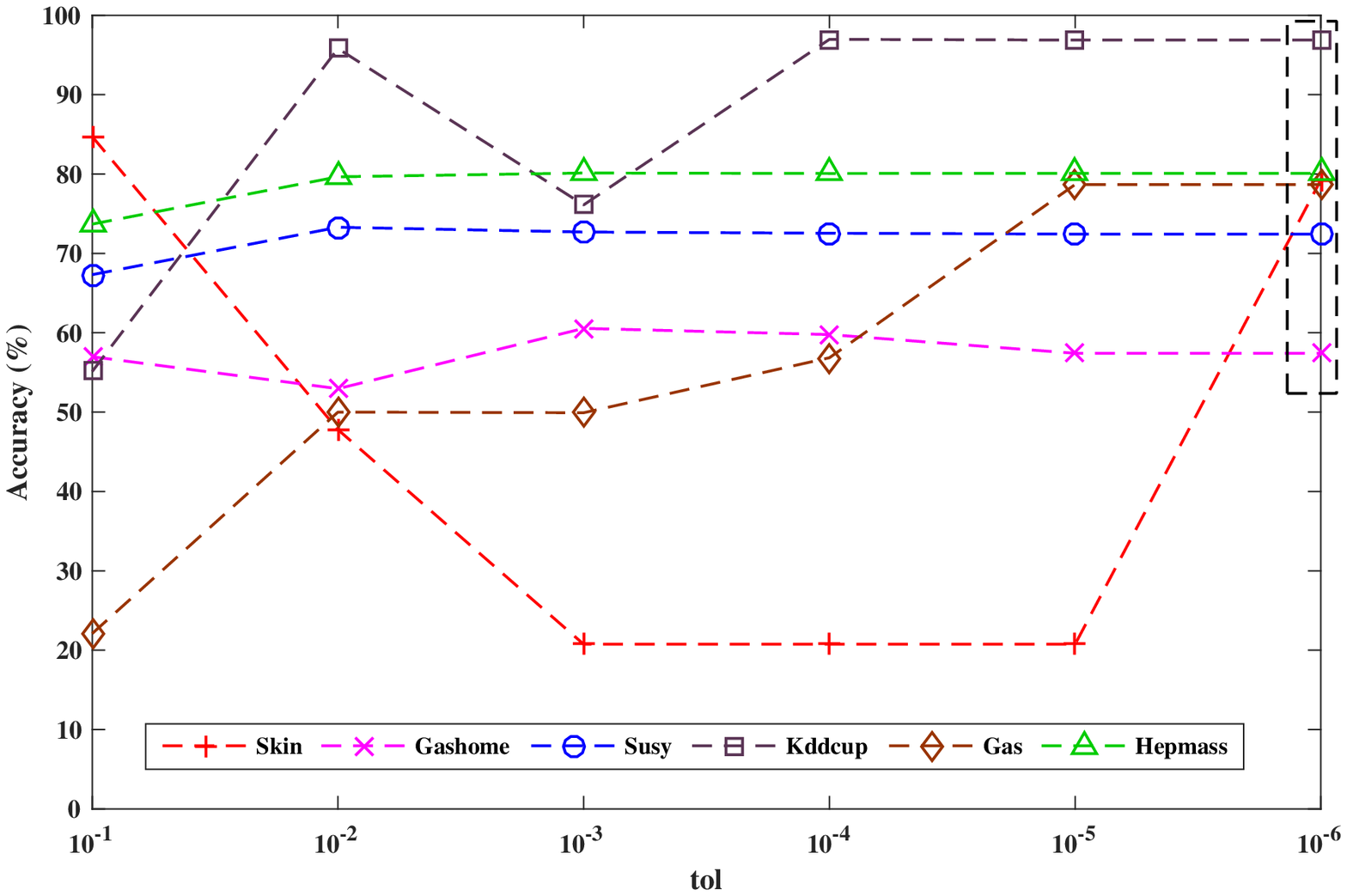}}
\subfigure[PEGASOS]
{\includegraphics[width=0.35\textheight]{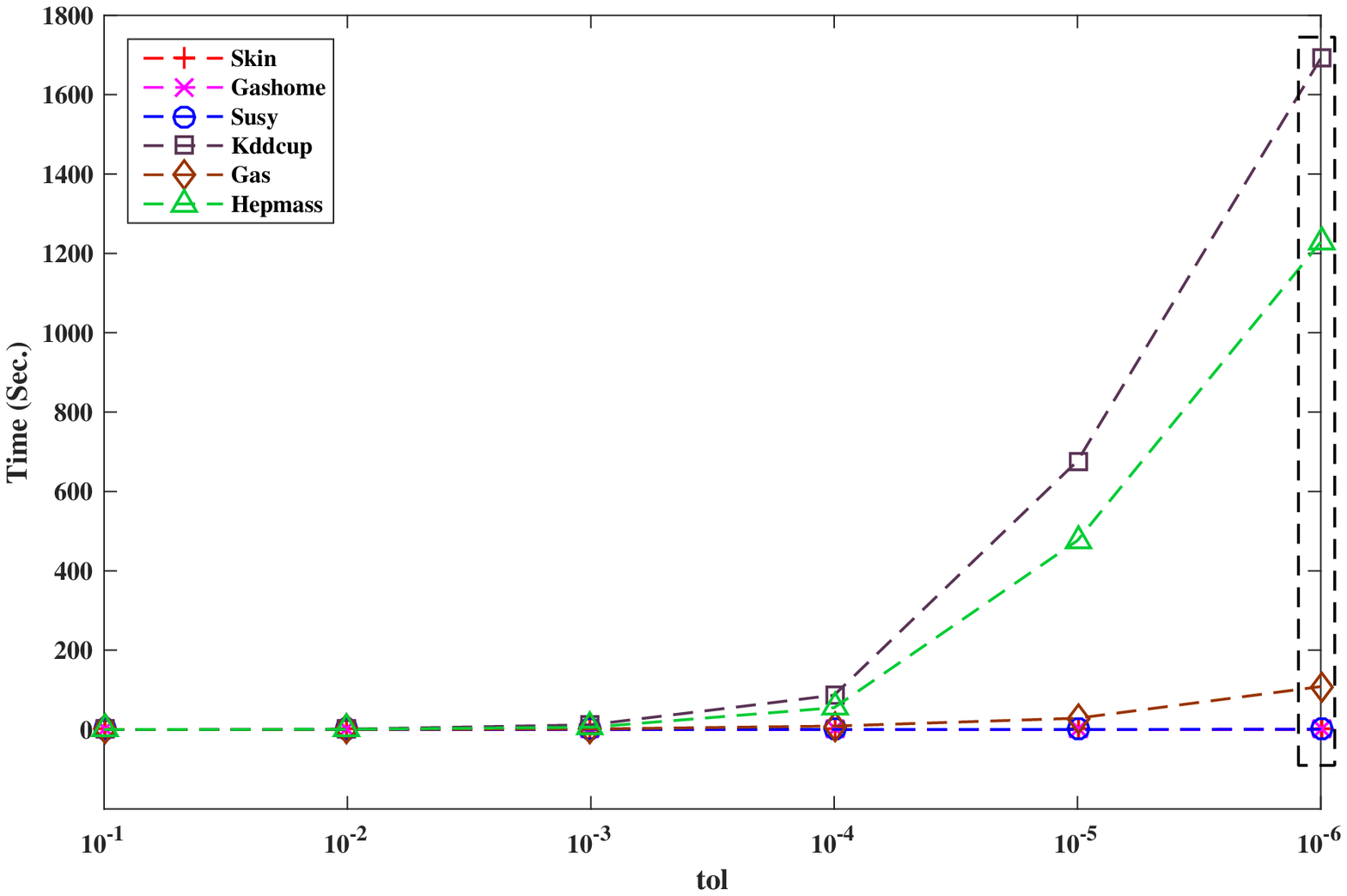}}
\subfigure[SGTSVM$^\dag$]
{\includegraphics[width=0.350\textheight]{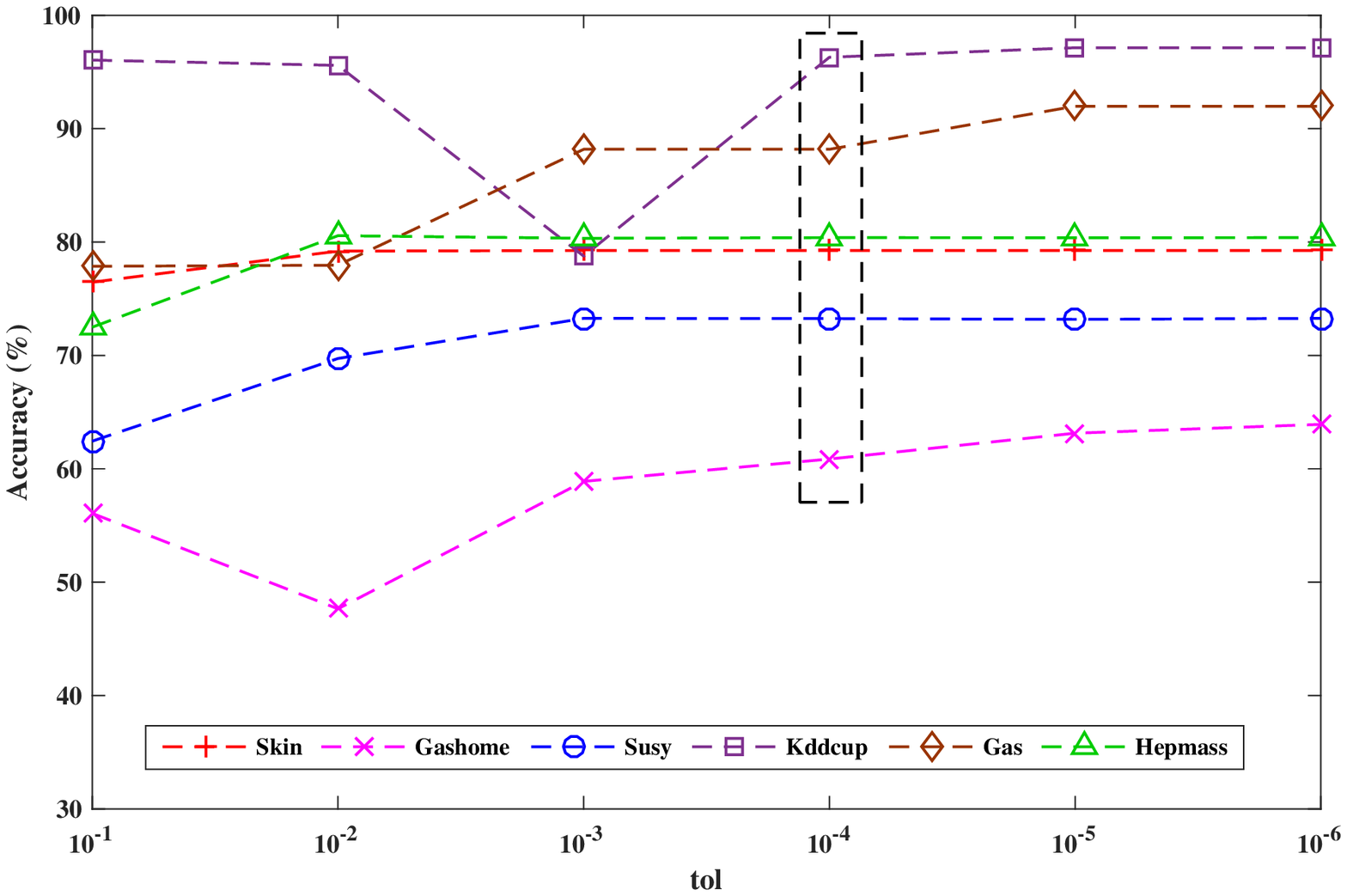}}
\subfigure[SGTSVM$^\dag$]
{\includegraphics[width=0.350\textheight]{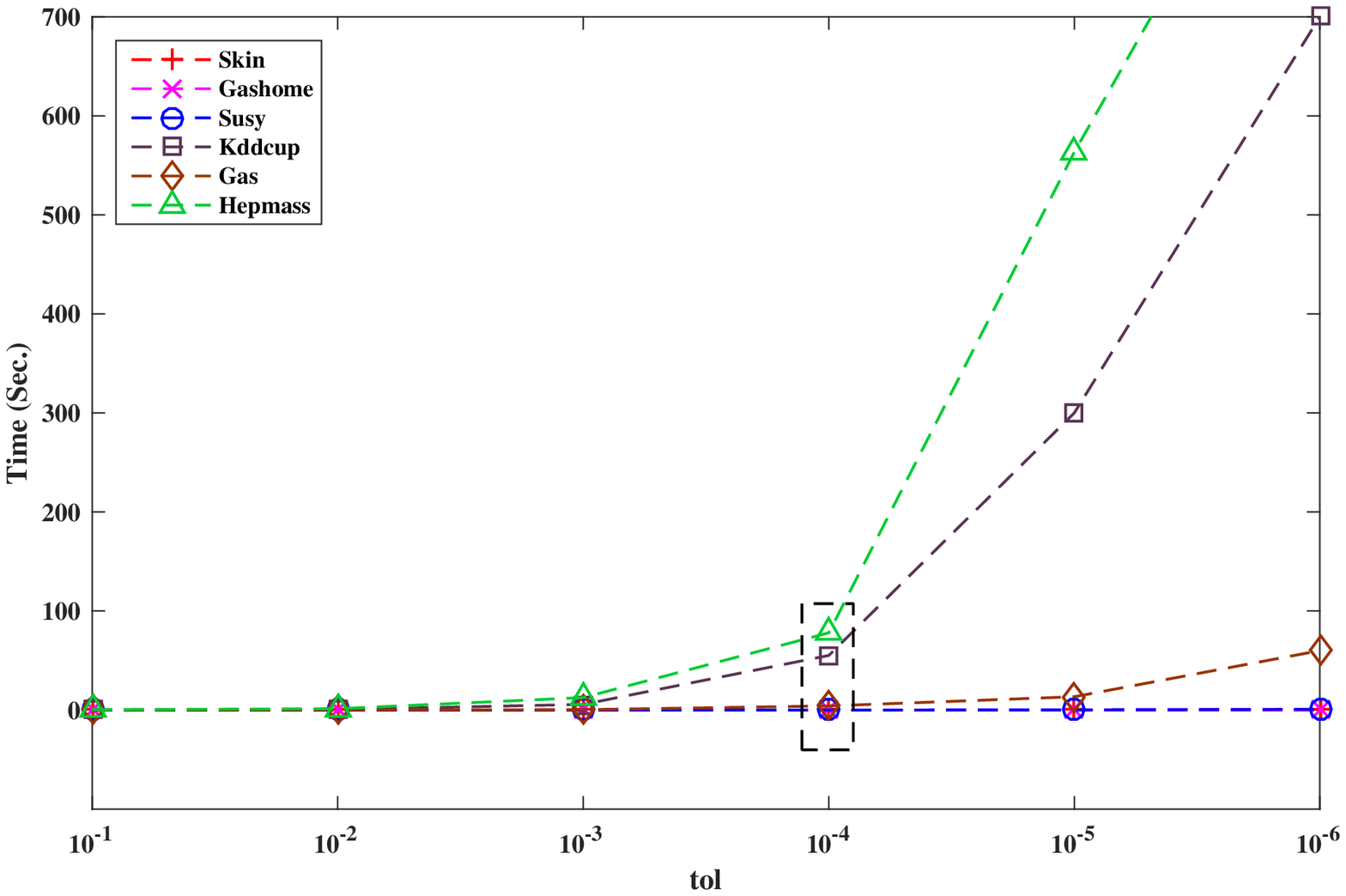}}
\subfigure[SGTSVM$^\sharp$]
{\includegraphics[width=0.350\textheight]{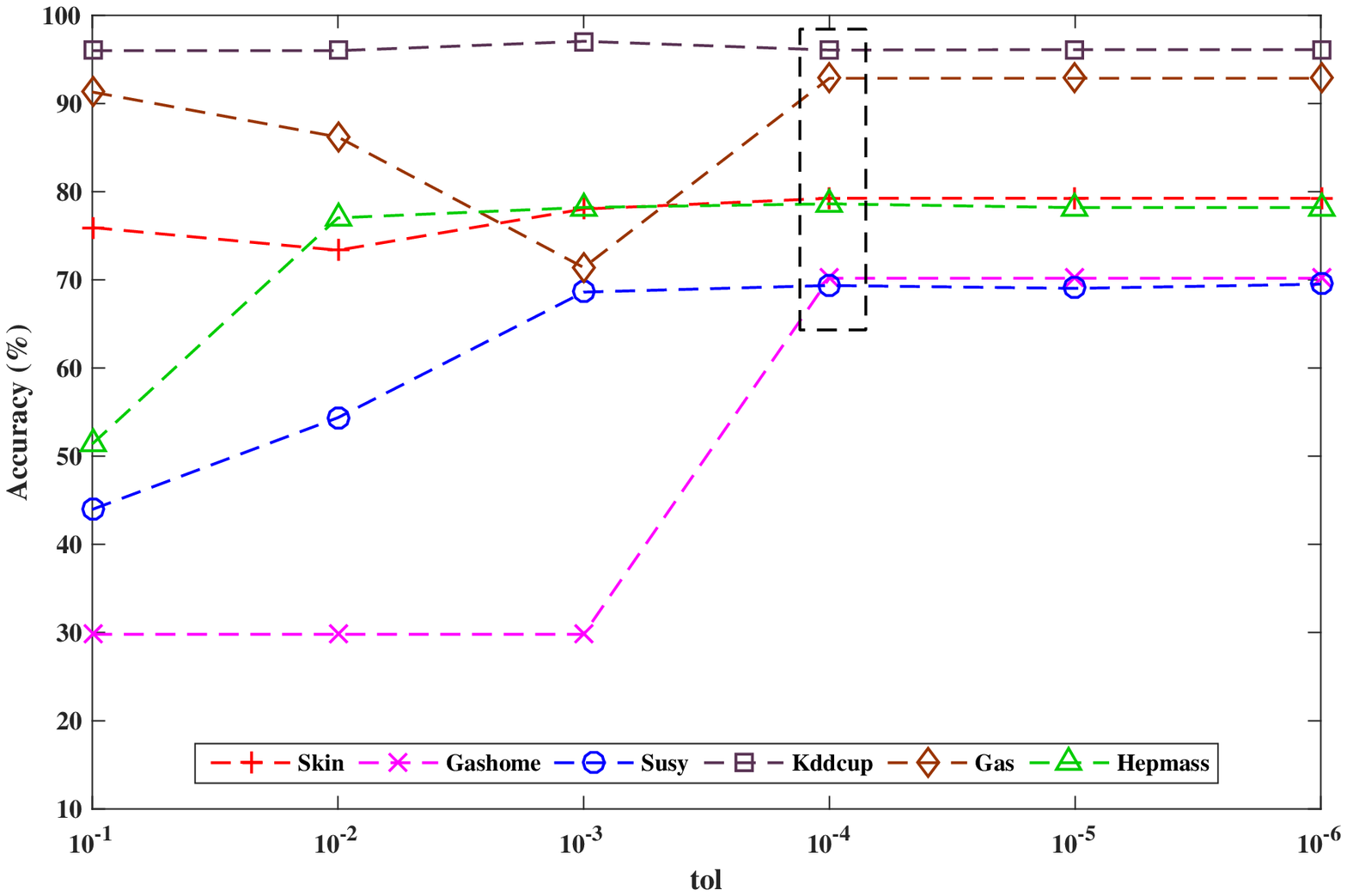}}
\subfigure[SGTSVM$^\sharp$]
{\includegraphics[width=0.350\textheight]{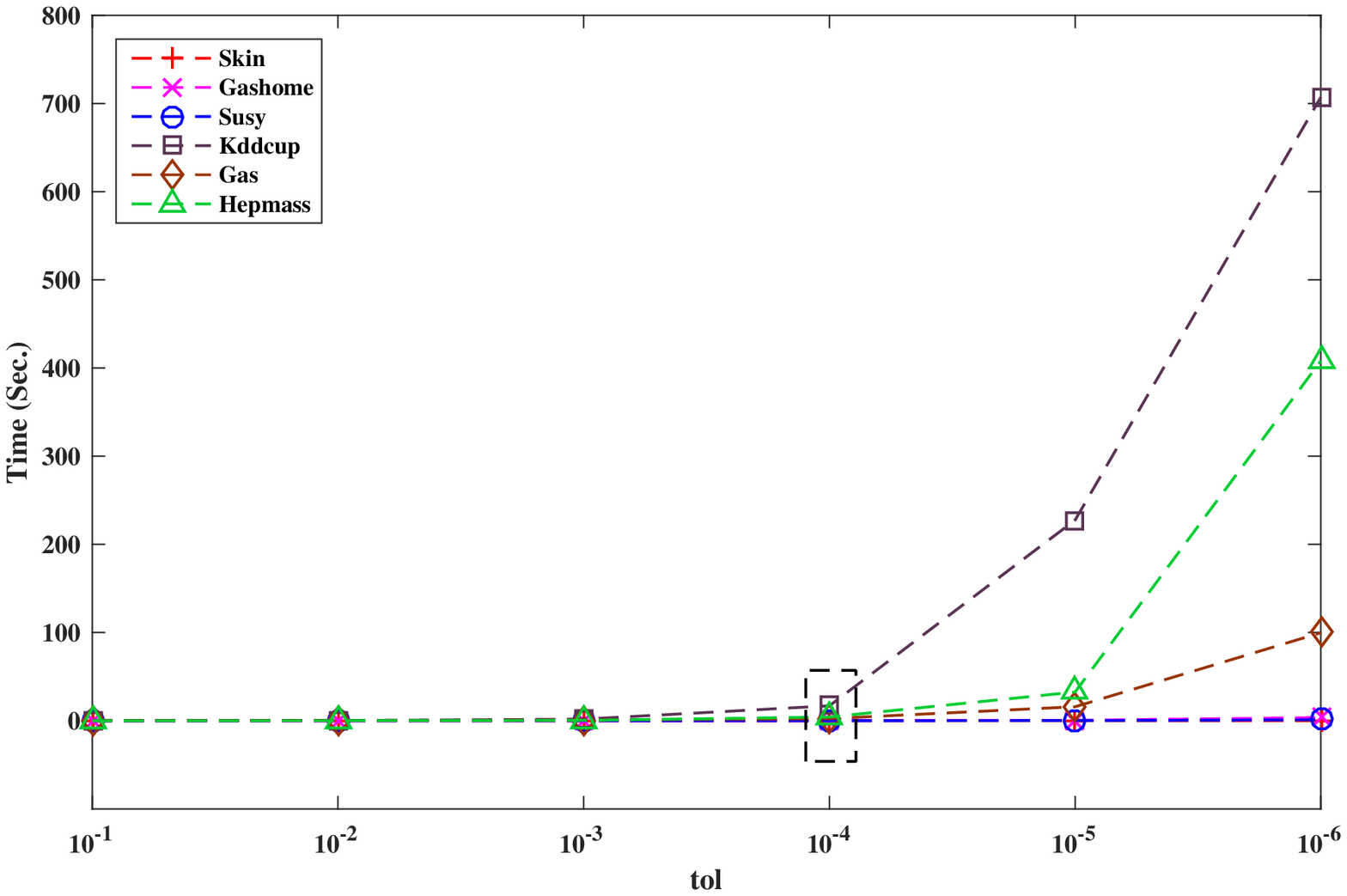}}
\\
\caption{The accuracy and learning time of PEGASOS, linear SGTSVM ($^\dag$), and nonlinear SGTSVM ($^\sharp$) on six large scale datasets. The dashed box corresponds to the chosen parameter $tol$.} \label{TolAC}
\end{center}
\end{figure*}

First, let us test the influence of parameter $tol$ on PEGASOS and SGTSVM. These methods were implemented on the large scale datasets, where $tol$ was respectively set to $\{10^i|i=-1,-2,\ldots,-6\}$ and other parameters were fixed to $0.1$. The testing accuracy and learning time are reported in Figure \ref{TolAC}. By comparing Figure \ref{TolAC} (a), (c), and (e), it can bee seen that our SGTSVM (including linear and nonlinear cases) is more stable than PEGASOS when $tol\leq10^{-4}$. In order to select a high accuracy with an acceptable learning time from Figure \ref{TolAC}, $tol$ is set to $10^{-6}$ for PEGASOS, and it is set to $10^{-4}$ for SGTSVM.

\begin{table}
\caption{The results on the large scale datasets.} \centering
\begin{tabular}{llllll}
\hline\noalign{\smallskip}
Data & & SVM &  PEGASOS & SGTSVM$^\dag$ &SGTSVM$^\sharp$ \\
\noalign{\smallskip}\hline\noalign{\smallskip}
Skin &validation(\%) &78.87 &82.46 &$\mathbf{85.23}$&84.70\\
245,057$\times$3 &testing(\%) &84.28 &85.39 &$\mathbf{87.70}$&85.34  \\
\hline
Gashome &validation(\%) &49.11 &70.09 &67.50&$\mathbf{74.49}$\\
919,438$\times$10&testing(\%)&82.57&72.85&76.09&$\mathbf{89.13}$\\
\hline
Susy&validation(\%)&$\mathbf{78.41}$&54.11&76.14&69.90\\
5,000,000$\times$18&testing(\%)&$\mathbf{78.52}$&56.44&75.09&68.61\\
\hline
Kddcup&validation(\%)&*&$\mathbf{96.39}$&95.24&93.19\\
4,898,432$\times$41&testing(\%)&*&96.42&97.45&$\mathbf{99.20}$\\
\hline
Gas&validation(\%)&*&69.77&89.73&$\mathbf{92.60}$\\
8,386,764$\times$16&testing(\%)&*&50.54&92.45&$\mathbf{92.86}$\\
\hline
Hepmass&validation(\%)&*&80.63&80.80&$\mathbf{82.18}$\\
10,500,000$\times$28&testing(\%)&*&80.84&$\mathbf{81.10}$&79.59\\
\hline \noalign{\smallskip}
\end{tabular} \label{LargeAC}
$^\dag linear ~case$;$^\sharp nonlinear ~case$; $^*out ~of ~memory$.
\end{table}

\begin{table}
\caption{The optimal parameters of SVM, PEGASOS, and SGTSVM.} \centering
\begin{tabular}{llllll}
\hline\noalign{\smallskip}
Data & & SVM &  PEGASOS & SGTSVM$^\dag$ &SGTSVM$^\sharp$ \\
& &c & c &$c_1=c_3,c_2=c_4$&$c_1=c_3,c_2=c_4,\mu$\\
&&$2^i$&$2^i$&$2^i,2^j$&$2^i,2^j,2^k$\\
\noalign{\smallskip}\hline\noalign{\smallskip}
Skin  &validation&-1 &-6 &0,-5&-6,-5,-3\\
&testing&-1&-4&1,-6&-1,0,-9\\
\hline
Gashome& validation& 0&-6&-4,-5&-3,-5,-2\\
&testing& -1&-1&-8,-7&-8,-1,-2\\
\hline
Susy& validation&1&0&-2,-6&-3,-1,-4\\
&testing&0&-7&-1,-3&-3,-3,-3\\
\hline
Kddcup&validation& NA& -6&-8,-4&0,-3,-4\\
&testing&NA&-2&-8,-4&-6,-1,-8\\
\hline
Gas&validation&NA&-1&-4,0&-1,-1,-6\\
&testing&NA&1&-3,1&-4,-8,-6\\
\hline
Hepmass&validation&NA&0&-1,-2&-4,-1,-3\\
&testing&NA&0&0,-2&-4,-2,-3\\
\hline \noalign{\smallskip}
\end{tabular} \label{Param}
$^\dag linear ~case$;$^\sharp nonlinear ~case$.
\end{table}

\begin{figure*}
\begin{center}
{\includegraphics[width=0.6\textheight]{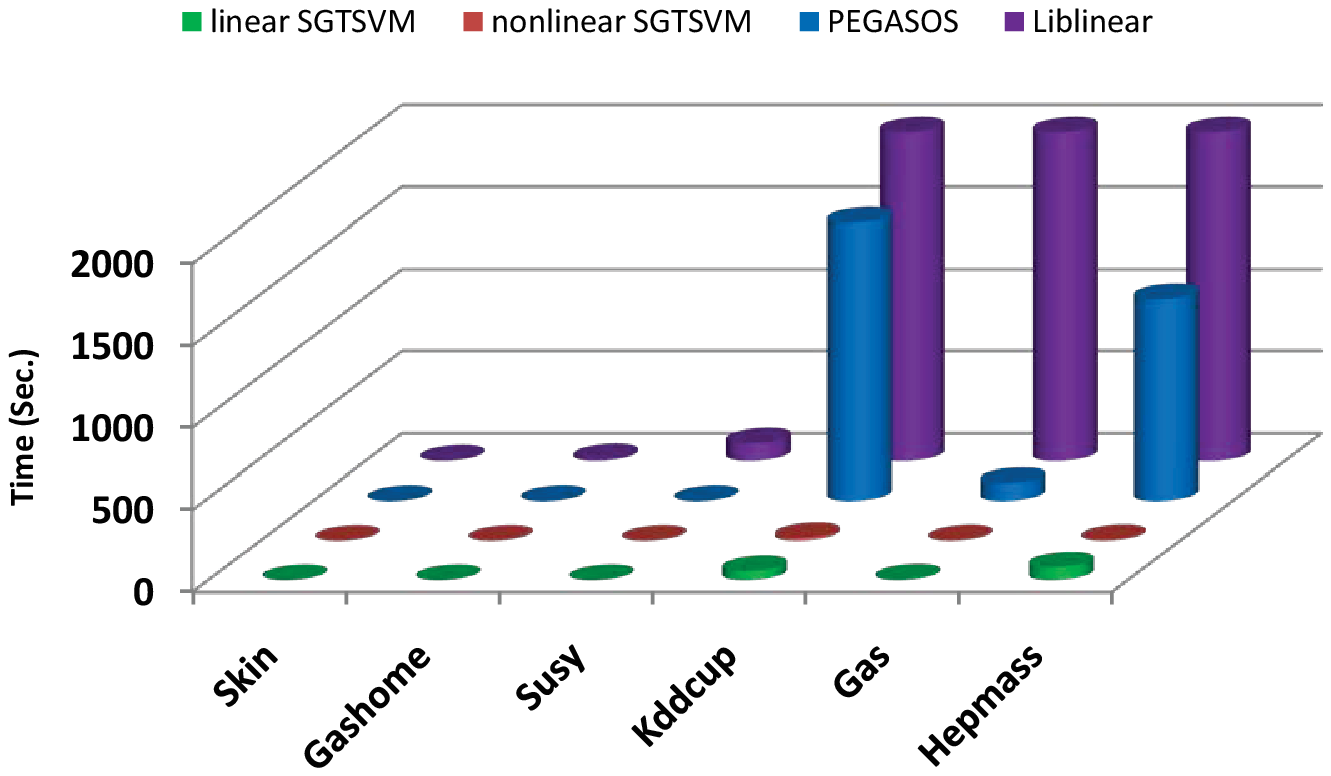}}
\caption{Learning time of SGTSVM, PEGASOS, and Liblinear on the large scale datasets with the optimal parameters.}\label{LargeTime}
\end{center}
\end{figure*}

Then, we compare SVM and PEGASOS with our SGTSVM with fixed $tol$ on these datasets. These methods' accuracies are recorded in Table \ref{LargeAC}, where validation accuracy is obtained by 5-fold cross validation on the training subset, and testing accuracy is obtained by the testing subset. The parameters $c$ in SVM and PEGASOS, $c_1$, $c_2$, $c_3$, and $c_4$ in SGTSVM are selected from $\{2^i|i=-8,-7,\ldots,1\}$, and the Gaussian kernel parameter $\mu$ in nonlinear SGTSVM is selected from $\{2^i|i=-10,-9,\ldots,-1\}$. For simplicity, we also set $c_1=c_3$ and $c_2=c_4$ in SGTSVM. The optimal parameters are recorded in Table \ref{Param}. From Table \ref{LargeAC}, it is obvious that our SGTSVM owns the highest accuracies on $9$ groups of comparisons, and performs as well as SVM or PEGASOS on the other $3$ groups. However, SVM performs much worse than SGTSVM on the dataset Gashome and cannot work on three much larger datasets. Though PEGASOS can work on these datasets, it performs much worse than SGTSVM on Susy and Gas. To further comparing the learning time of these methods, we report the one-run time in Figure \ref{LargeTime} with the optimal parameters. It is obvious that SGTSVM (including linear and nonlinear cases) is much faster than the others. Thus, our SGTSVM is comparable to SVM and PEGASOS on these large scale datasets. In addition, the softwares of SGTSVM and PEGASOS need much less RAM than Liblinear (the software of SVM). In detail, Liblinear needs store the entire training set in RAM, while PEGASOS and SGTSVM only store a subset related to the iteration. Due to the required memory of Liblinear increases with the size of dataset, it tends to out of memory with the increasing data size, while the same thing does not appear in PEGASOS or SGTSVM.

\section{Conclusion}
The stochastic gradient twin support vector machines (SGTSVM) based on stochastic gradient decent algorithm has been
proposed. By hiring the nonparallel hyperplanes, SGTSVM is more stable on stochastic sampling than PEGASOS. In theory, we prove that SGTSVM is convergent, and it is an approximation of TWSVM with uniform sampling. Experimental results
have confirmed the merits of SGTSVM and shown our SGTSVM has better accuracy compared with
Liblinear and PEGASOS with the fastest learning speed. For practical convenience,
the corresponding SGTSVM codes (including Matlab and C language) can be downloaded from
\url{http://www.optimal-group.org/Resource/SGTSVM.html}. For the future work, it is possible to design some special sampling for SGTSVM to obtain more powerful performance, together with applying SGTSVM on the bigdata problems.



\section*{Acknowledgment}
This work is supported by the National Natural Science Foundation of
China (Nos. 11501310, 11201426, and 11371365), the Natural Science Foundation of Inner Mongolia Autonomous Region of China (No. 2015BS0606), and the Zhejiang Provincial Natural Science Foundation of China (No. LY15F\\030013).

\bibliographystyle{plain}
\bibliography{FBib}

\end{document}